\documentclass{article} 
\usepackage{iclr2026_conference,times}

\usepackage{amsmath,amsfonts,bm}

\def\eqref#1{equation~\ref{#1}}

\def\1{\bm{1}}

\DeclareMathAlphabet{\mathsfit}{\encodingdefault}{\sfdefault}{m}{sl}
\SetMathAlphabet{\mathsfit}{bold}{\encodingdefault}{\sfdefault}{bx}{n}

\usepackage{hyperref}
\usepackage{url}

\usepackage{graphicx}
\usepackage{algorithm}
\usepackage{algorithmicx}
\usepackage{algpseudocode}
\usepackage{amsmath}
\usepackage{amssymb}
\usepackage{booktabs}
\usepackage{multirow}
\usepackage[table,xcdraw]{xcolor}
\usepackage{caption}
\usepackage{amsthm}        
\newtheorem{claim}{Claim}  
\newtheorem{proposition}{Proposition}
\usepackage{wrapfig}

\title{CIAR: Interval-based Collaborative Decoding for Image Generation Acceleration}

\author{
  \textbf{Keming Ye}\textsuperscript{1}, \textbf{Zhou Zhao}\textsuperscript{1}, \textbf{Fan Wu}\textsuperscript{2}, \textbf{Shengyu Zhang}\textsuperscript{1} \\
  \\ 
  \textsuperscript{1}Zhejiang University \\
  \textsuperscript{2}Shanghai Jiao Tong University \\
  \\ 
  \texttt{\{kemingye, zhaozhou, sy\_zhang\}@zju.edu.cn} \\
  \texttt{wu-fan@sjtu.edu.cn}
}

\iclrfinalcopy 
\begin{document}

\maketitle

\begin{abstract}

Auto-regressive (AR) models have recently made notable progress in image generation, achieving performance comparable to diffusion-based approaches. However, their computational intensity and sequential nature impede on-device deployment, causing disruptive latency. We address this via a cloud-device collaboration framework \textbf{CIAR}, which utilizes on-device self-verification to handle two key properties of visual synthesis: \textit{the vast token vocabulary} required for high-fidelity images and \textit{inherent spatial redundancy} which leads to extreme predictability in homogeneous regions, while object boundaries exhibit high uncertainty. Uniform verification wastes resources on such redundant tokens. Our solution centers on an on-device token uncertainty quantifier, which adopts continuous probability intervals to accelerate processing and make it feasible for large visual vocabularies instead of conventional discrete solution sets. Additionally, we incorporate a Interval-enhanced decoding module to further speed up decoding while maintaining visual fidelity and semantic consistency via a distribution alignment training strategy.
Extensive experiments demonstrate that CIAR achieves a 2.18× speed-up and reduces cloud requests by 70\%, while preserving image quality compared to existing methods.

\end{abstract}

\section{Introduction}

Auto-regressive (AR) models have emerged as a powerful paradigm for high-fidelity image synthesis~\citep{brown2020language,touvron2023llama}, with architectures like Anole and LlamaGen~\citep{sun2024llamagen,chern2024anole} achieving remarkable results rivaling diffusion models~\citep{ho2020diffusion,rombach2022ldm}. These approaches leverage discrete token representations that capture rich visual semantics through codebooks of increasing size, enabling unprecedented photorealism. However, despite their capabilities, AR models suffer from critical limitations that impede their practical deployment.~\citep{van2017vqvae,esser2021vqgan} A key challenge lies in their substantial parameter size, driven by the expanding codebooks used for tokenization, which makes it hard for on-device deployment. Furthermore, the inherent sequential nature of AR models, requiring one-by-one token generation, results in slow inference speeds that degrade the users' experience, especially on resource-constrained devices.

A promising direction to address these practical limitations is to leverage a cloud-device collaborative framework. In this setup, a lightweight AR model is deployed on the device for fast token generation, while a larger model on the cloud performs token verification in parallel, similar to the speculative decoding (SD) paradigm~\citep{chen2023acceler_sd,cai2023medusa}. Several developments in visual speculative decoding provide a feasible foundation for this architecture, offering the potential to combine the efficiency of device computing with the power of cloud-based models for real-time image generation.~\citep{wang2024continuous_sd,jang2024lantern}

However, two fundamental challenges critically undermine conventional cloud-device collaboration for AR image generation. The first is the \textit{prohibitive overhead from high-volume token verification}. Unlike text, image tokens scale quadratically with resolution, creating a data explosion that establishes the network as the primary bottleneck~\citep{van2016pixelrnn,chang2022maskgit}. This not only negates the cloud's computational advantage but also incurs substantial energy and operational costs, rendering the framework impractical for real-world use. The second is the \textit{inefficiency of the uniform verification policy}. Conventional verification employs an indiscriminate, token-by-token validation process~\citep{zhang2025Quantize-Sample-and-Verify}, treating every part of the image as equally important. This approach is misaligned with the intrinsic properties of visual data, which exhibit spatially heterogeneous uncertainty~\citep{kendall2017uncertainties}. Vast regions of an image, such as backgrounds and smooth surfaces, contain highly redundant and predictable information (low-entropy regions). Verifying tokens in these areas squanders communication cost and cloud resources on overwhelmingly correct predictions~\citep{bolya2022tokenmerge}. Conversely, semantically critical areas like object boundaries and complex textures introduce high localized uncertainty where generation errors are most likely to occur and propagate. The uniform verification strategy fails to distinguish between these cases, resulting in a suboptimal trade-off: it wastefully validates the predictable while failing to concentrate scrutiny on the uncertain, thereby limiting both speed-up potential and quality assurance.

\begin{figure}[t]
    \centering
    \includegraphics[width=1\textwidth]{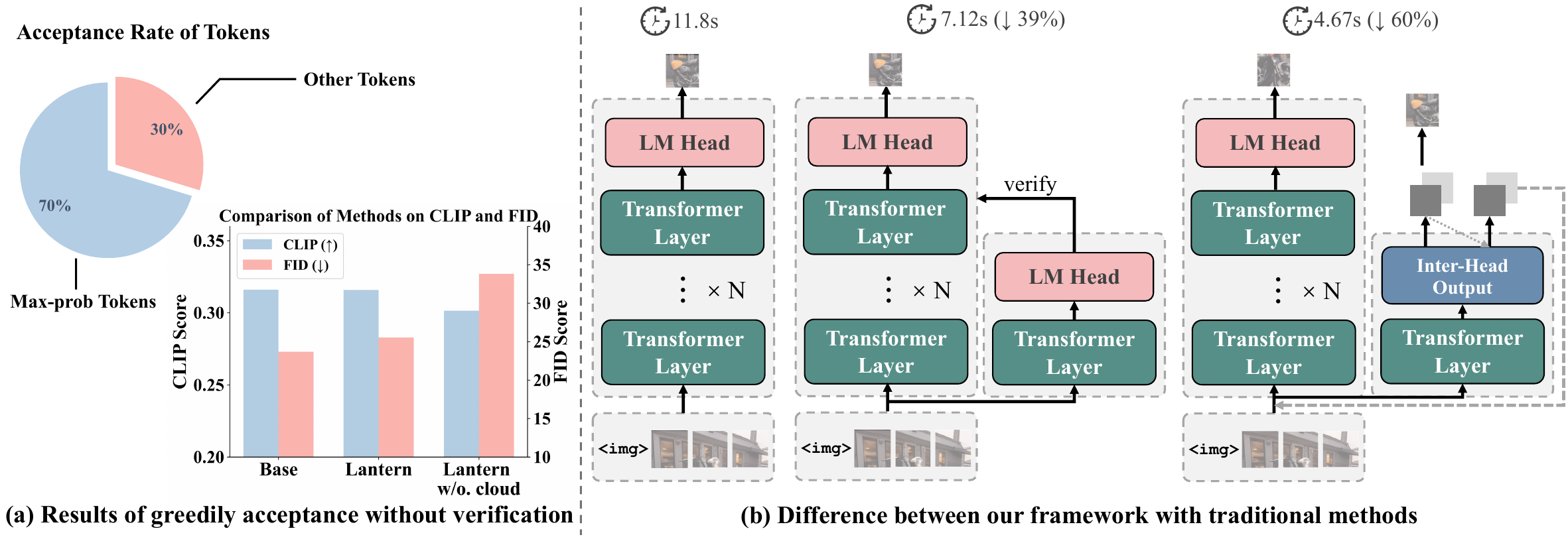}
    \caption{(a) Acceptance analysis of Lantern. The pie chart shows the ratio of max-prob vs. other tokens, and the bar chart compares Lantern without verification to the baseline. (b) Comparison of decoding frameworks. From left to right: baseline, Lantern, and our CIAR with Inter-Head and cloud-device collaboration, which reduces latency while preserving output quality.}
    \label{fig:intro}
\end{figure}

Building on our prior analysis, we first explore the limits of pure on-device decoding. As shown in Figure~\ref{fig:intro}(a), when the device model greedily selects the top-probability token, overall scene semantics are preserved but perceptual quality degrades markedly, with visible artifacts and blurring in fine details~\citep{dahl2017pixel}. A closer inspection reveals that approximately 70\% tokens approved by the cloud already match the device’s top-output, indicating that most tokens can be decided locally, while a subset of detail-related tokens still benefits from cloud correction~\citep{liu2023online,zhang2023selfsd}. Motivated by this finding, we hope to insert a lightweight uncertainty quantifier on the device for self-verification. However, off-the-shelf entropy measures\cite{ali2025entropy-lens,nikitin2024kle,stolte2025random} are ineffective over large visual vocabularies and ignore spatial context, thereby masking true ambiguity. Moreover, once the device commits to a verified prefix, its conditional distribution may drift from the cloud model’s expectations, impairing subsequent decoding~\citep{bengio2015scheduled}. Thus, it is essential to align the device and cloud distributions.

To tackle these issues, we introduce \textbf{CIAR}—a \underline{\textbf{C}}ollaborative \underline{\textbf{I}}nterval-based \underline{\textbf{A}}uto\underline{\textbf{R}}egressive Decoding framework  designed for efficient, high-fidelity visual synthesis. Our framework consists of two modules: on‑device Interval‑based uncertainty quantifier and Interval‑enhanced decoding module, and introduces a distribution alignment training strategy to support them.

\textbf{Interval‑Based uncertainty quantifier} measures token uncertainty on device, capturing continuous visual distribution characteristics while improving computational efficiency. At its core is our \textit{Inter‑Head}, which outputs upper and lower probability intervals for each token. By using continuous probability intervals instead of discrete solutions, it preserves the continuous nature of visual tokens for more precise uncertainty quantification and significantly reduces time complexity.

\textbf{Cloud-Enhanced decoding module} facilitates collaboration between cloud and device to maintain distribution alignment. Excessive local token generation can lead to distribution drift, impairing visual coherence. We counteract this by incorporating prefix injection and interval-based features for \textit{enhanced-decoding}, which jointly constrain distribution shift and sustain high image quality.

\textbf{Distribution alignment training strategy} is designed to train the Inter-Head for alignment with the cloud distribution. Since the Inter-Head outputs probability intervals,which is distinct from the cloud module, we introduce an \textit{interval-aware DRO} variant, employing an alignment loss computed over both lower and upper bound logits, progressively reducing divergence between distributions and ensuring output consistency.

Our contributions can be summarized as follows:
\begin{enumerate}
    \item We propose \textbf{CIAR}, a collaborative interval-based autoregressive decoding framework that leverages on-device uncertainty-aware verification to eliminate redundant uniform verification and accelerate image synthesis while maintaining high fidelity.
    \item We design an \textbf{Inter-Head} for continuous interval estimation, which derives upper and lower probability intervals from logits, and an \textbf{Cloud-Enhanced decoding} module that uses collaborative feature guidance for enhanced-decoding, supported by an interval-aware DRO training strategy, aligning cloud and device distribution despite structural discrepancy.
    \item We conduct extensive experiments, demonstrating that CIAR achieves a \textbf{2.18× speed-up} and reduces cloud requests by \textbf{70\%} compared to state-of-the-art speculative decoding methods, without compromising visual fidelity.
\end{enumerate}
\section{Related Works}

\paragraph{Visual Auto-Regressive Models.}
Visual AR models extend language model paradigms to image generation by converting images into discrete token sequences~\citep{van2017vqvae,esser2021vqgan}. Early works with raster scan orders suffered high cost and inferior quality compared to diffusion models. More recent methods such as VQVAE/VQGAN-based AR models like LlamaGen~\citep{sun2024llamagen} utilize large vocabularies and improved tokenizers to produce high-fidelity next-token predictions. Models like Anole~\citep{chern2024anole} further support native multimodal and interleaved image-text generation with fine-tuned visual tokens. Despite the advances, these models remain too slow and resource-intensive for on-device use under large vocabulary, long sequence settings. 

\paragraph{Cloud-Device Collaborative Inference.}
Cloud-device collaboration balances computation between powerful cloud servers and resource-limited devices. Federated Learning~\citep{mcmahan2017federated} is a common approach, enabling local training with cloud-based gradient aggregation while preserving privacy, but it struggles with real-time distribution shifts. Other works focus on device adaptation: DCCL~\citep{yao2021DCCL} performs cloud pre-training followed by on-device fine-tuning, while DUET~\citep{lv2023duet} uses a cloud hypernetwork to generate personalized weights from device data. More interactive paradigms have also emerged, such as CD-CCA~\citep{wang2024cdcca}, which uploads high-uncertainty samples for cloud analysis, and DC-CCL~\citep{ding2023dc-ccl}, which splits large models into cloud and device modules and refines both via knowledge distillation. 


\begin{figure*}[t]
    \centering
    \includegraphics[width=1\textwidth]{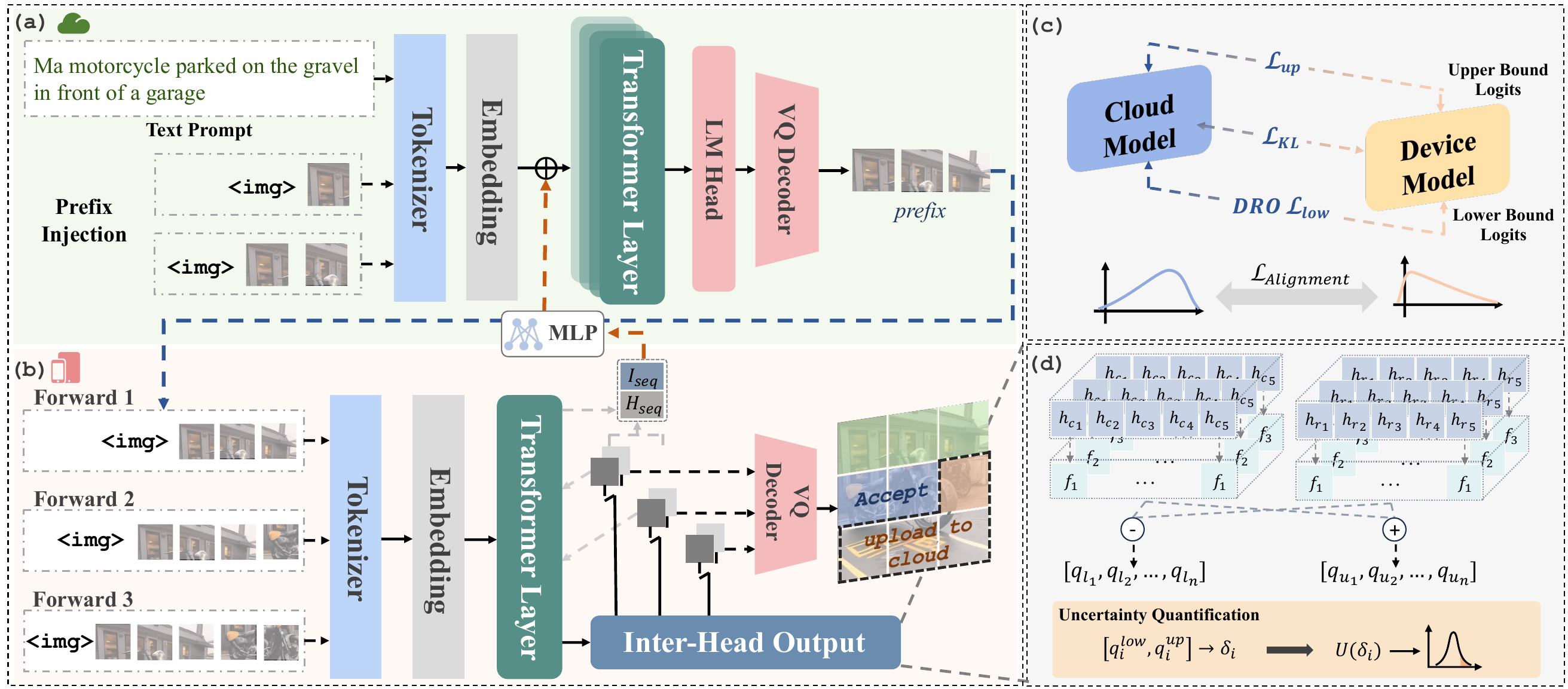}
    \caption{Overview of CIAR. (a) The cloud-side AR model generates image token prefixes from the input prompt. These prefixes are then sent to (b) a lightweight device model with Inter-Head accepts confident tokens locally and sends uncertain ones with interval features to the cloud for verification and distribution alignment. (c) Interval-Based Alignment Strategy. (d) Computation of uncertainty intervals in the Inter-Head.}
    \label{fig:method}
\end{figure*}

\section{Methodology}
\label{sec:3}
In this section, we introduce CIAR, a collaborative interval-based autoregressive decoding framework, designed to accelerate on-device decoding while enhancing visual synthesis quality. We first define the problem setup, then describe the overall framework and detail the key components.\par

\subsection{Preliminary}
\label{sec:3.1}

We denote the cloud‐side AR model by \emph{Cloud\,AR} with conditional distribution $q_{\mathrm{cloud}}(x_t | x_{<t}, T)$, and the device model by \emph{Device\,AR} with $q_{\mathrm{device}}(x_t | x_{<t}, T)$. Here $T = (t_1, \dots, t_L)$ is the tokenized text prompt and $X = (x_1, \dots, x_K)$ is the sequence of image tokens. Each token $x_t$ takes values in $\mathcal{C} = \{c_1, \dots, c_L\}$, where $L$ is the size of the visual codebook.
In text‐to‐image generation, the model samples
\begin{equation}
    \begin{aligned}
        q(X \mid T) = \prod_{t=1}^{K} p\bigl(x_t \mid x_{<t},\,T\bigr),
        \text{where} \quad K = h \times w, \quad h = \frac{H}{f}, \quad w = \frac{W}{f}
    \end{aligned}
\end{equation}
for an $H \times W$ image and downsampling factor $f$. Prediction is strictly autoregressive: each $x_t$ depends only on its predecessors and the prompt. Once $(x_1,\dots,x_K)$ are generated, they are mapped to embeddings via a learned codebook:
\begin{equation}
    E = \{\,e_i\in\mathbb{R}^D : i=1,\dots,V\},
\end{equation}
so that $x_t \rightarrow e_{x_t}$. The embeddings are then reshaped into a tensor of shape $(h,\,w,\,D)$ in raster-scan order and passed to a decoder (e.g., VQVAE or VQGAN) to reconstruct the final image.

\subsection{CIAR: Collaborative Interval-based AutoRegressive Decoding}
CIAR consists of three key components: an on‑device \textit{Interval-Based uncertainty quantifier} powered by a lightweight \textit{Inter‑Head}; an \textit{Cloud-Enhanced decoding module} on the cloud that refines generation using device‑provided interval cues; and a \textit{distribution alignment training strategy} that aligns token distributions between device and cloud.

\label{sec:3.2}
\subsubsection{Device-Side Uncertainty Quantification}
The device uses a lightweight AR model for fast token generation. To identify reliable tokens, precise uncertainty quantification is essential. Traditional entropy-based methods are ill-suited for visual synthesis due to two inherent visual properties: (1) large codebooks yield flat probability distributions, and (2) spatial continuity leads to similar entropy among related tokens. We thus propose the \textbf{Interval Head (Inter-Head)} to estimate uncertainty via continuous probability intervals.

\paragraph{Interval Head Architecture.}
Different from standard LM Head, Inter-Head extends the output dimension to $2 \times |\mathcal{V}|$ where $\mathcal{V}$ is the vocabulary. Formally, given hidden state $\mathbf{h}_t$ for token $t$:
\begin{equation}
\begin{aligned}
\mathbf{c}_t = \text{Linear}_{\text{center}}(\mathbf{h}_t), \quad \mathbf{r}_t = \text{Softplus}\left(\text{Linear}_{\text{radius}}(\mathbf{h}_t)\right), \quad \mathbf{p}^{I}_t &= \textbf{InterFuse}(\mathbf{c}_t, \mathbf{r}_t).
\end{aligned}
\end{equation}
The use of a $\text{Softplus}$ function ensures the radius $r_t$ is strictly non-negative. This defines a logit interval $[l_t^l, l_t^u] = [c_t - r_t, c_t + r_t]$. We define a new operator $\textbf{InterFuse}$ to yield the final probability interval $\mathcal{P}_t = [p_t^l, p_t^u]$, which is guaranteed to satisfy the properties of a valid solution set, namely $\sum_{i} p_{i}^l \le 1 \le \sum_{i} p_{i}^u$. A detailed proof is provided in the Appendix~\ref{app:interval_proof}.

\paragraph{Uncertainty Quantification.}
Given the probability interval $\mathcal{P}_t$, we require a metric that converts it into a scalar measure of uncertainty. A simple average is inadequate, as token uncertainty depends on both the total volume and the dispersion of the interval widths. Based on properties observed in visual autoregressive generation, we find that uncertainty increases with a larger total interval volume and rises with greater variance among interval widths. A detailed analysis of these principles is provided in the Appendix~\ref{appendix:uncertainty-analysis}.

Adhering to these principles, we formulate a novel uncertainty score. Let $\delta_t = p_t^u - p_t^l \in \mathbb{R}_{\ge 0}^{|\mathcal{V}|}$ be the vector of probability interval widths. We first define the \textit{Total Ambiguity Volume}, $\Omega_t$, as the $L_1$ norm of the widths, and the \textit{Confidence Disparity}, $\Sigma_t$, as the standard deviation of the widths. The final uncertainty score $\mathcal{U}(\mathcal{P}_t)$ is their product:
\begin{equation}
\label{eq:uncertainty_score}
\mathcal{U}(\mathcal{P}_t) = \underbrace{\left\| \delta_t \right\|_1}_{\Omega_t: \text{Total Ambiguity}} \cdot \underbrace{\sqrt{\frac{1}{|\mathcal{V}|} \sum_{i=1}^{|\mathcal{V}|} \left(\delta_{t,i} - \bar{\delta}_t\right)^2}}_{\Sigma_t: \text{Confidence Disparity}}
\end{equation}
where $\bar{\delta}_t$ is the mean of the interval widths $\delta_{t,i}$. This formulation ensures that the score is high only when both the total ambiguity and the disparity are significant, providing a robust and sensitive measure of uncertainty. This score is then used by a dynamic thresholding policy to make the final determination for token acceptance.

\subsubsection{Cloud-Enhanced Decoding}

Early in generation, limited context on the device often causes deviation from the target distribution, while later, substantial local tokens may lead the cloud predictions to drift. To mitigate this, we introduce a \textit{Cloud-Enhanced Decoding module} combining Prefix Injection and Interval-enhanced Decoding.

\paragraph{Prefix Injection.}

To improve initial generation quality and reduce unnecessary cloud requests, the cloud pre-generates a short prefix $\{y_{1}, \dots, y_{m}\}$ where $m = \lfloor \rho \cdot T \rfloor$, with $\rho$ as the prefix rate. This prefix provides high-quality contextual guidance to the device, constraining local generation and alleviating distribution drift. The prefix length $\rho$ involves a trade-off: larger $\rho$ improves alignment and fidelity but increases cloud latency, while smaller $\rho$ speeds up decoding at the cost of weaker guidance. Empirically, a moderate $\rho$ achieves the best balance, significantly lowering cloud queries while maintaining visual quality.

\paragraph{Interval-enhanced Decoding.}



We introduce \textit{Interval Feature Injection} to incorporate device-side uncertainty into cloud verification. For each locally accepted token $x_{t+i}$, the device’s Inter-Head produces center and radius logits $(c_{t+i}, r_{t+i})$, defining a probability interval $\mathcal{P}_{t+i}$. A lightweight projection network $\phi$ maps the token’s hidden state $f_{t+i}$ and its interval $\mathcal{P}_{t+i}$ to a compact \textbf{Interval Feature} $f_{t+i}^I$:
\begin{equation}
\label{eq:interval_feature}
    f_{t+i}^I = \phi(\textbf{Concat}\left(f_{t+i}, \mathcal{P}_{t+i}\right)) \in \mathbb{R}^{d}.
\end{equation}
This interval feature serves as structured, actionable intelligence about the device's confidence.

During the verification or resampling phase on the cloud, this auxiliary feature is injected into the cloud model's decoder. As the cloud model processes the token $x_{t+i}$, its decoder's computation is conditioned not only on the token's embedding $E(x_{t+i})$ but also on our interval feature. The update rule for the cloud's hidden state $h_{t+i+1}^C$ is thus modified:
\begin{equation}
    h_{t+i+1}^C = \text{Decoder}^C\left(E(x_{t+i}) + f_{t+i}^I\right)
\end{equation}
By augmenting the input with this confidence-aware feature, we actively steer the cloud's generative process. This maintains distributional alignment and prevents error accumulation, enhancing output consistency and visual continuity.


\subsubsection{Interval-based Alignment Strategy}
Since the Inter-Head architecture differs from the cloud's output layer, parameter sharing is infeasible. This necessitates a training strategy that simultaneously: (1) ensures accurate probability interval estimation, and (2) maintains distributional alignment with the cloud model. 
The nature of this task, which is to simultaneously optimize for an optimistic (upper bound) and a pessimistic (lower bound) outcome, is conceptually analogous to finding a robust solution that performs well even in a worst-case scenario. This parallel naturally draws inspiration from the principles of Distributionally Robust Optimization (DRO). Therefore, we propose the \textbf{Inter-DRO Loss} which can operate on probability intervals, to address this alignment challenge.

\paragraph{Classical Distributionally Robust Optimization.}

Traditional DRO improves model robustness by optimizing against distribution shifts. Instead of minimizing loss under a single distribution, DRO minimizes the worst-case expected loss over an uncertainty set $\mathcal{U}$ of plausible distributions. Formally, the objective for a model with parameters $\theta$ is to solve the minimax problem:
\begin{equation}
\label{eq:dro_general}
\min_{\theta} \left\{ \sup_{P \in \mathcal{U}} \mathbb{E}_{z \sim P} [\mathcal{L}(f_{\theta}(z))] \right\}
\end{equation}
where $f_{\theta}$ is the model, $z$ represents a data sample, and $\mathcal{L}$ is the loss function. 

In practice, this abstract formulation is often operationalized through more concrete settings. One prominent example is Group DRO, where the worst-case distribution is selected from a finite set of known data groups $\mathcal{G} = \{1, \dots, m\}$. The objective then simplifies to minimizing the maximum expected loss across all groups:
\begin{equation}
\label{eq:group_dro}
\min_{\theta} \max_{g \in \mathcal{G}} \mathbb{E}_{z \sim P_g}[\mathcal{L}(f_{\theta}(z))]
\end{equation}
This core principle of optimizing against a worst-case scenario forms the conceptual foundation for our Inter-DRO loss, which we will adapt to the unique structure of our probability intervals.

\paragraph{Inter-DRO Loss}
Our training objective for the \textbf{Interval Head} operationalizes Distributionally Robust Optimization (DRO) principles by treating the predicted lower probability bound $q^L$ as the worst-case scenario. This interpretation naturally aligns with DRO's minimax philosophy. Unlike standard language modeling heads, the Interval Head outputs a valid probability interval parameterized by center $\mathbf{c}$ and radius $\mathbf{r}$ logits, requiring a multi-faceted loss design.

To ensure predictive accuracy and cloud-device alignment, we minimize divergence from cloud outputs $\mathbf{o}_{\text{cloud}}$ via anchor loss:
\begin{equation}
\mathcal{L}_{\text{anchor}} = \lambda_v \|p - p_{\text{cloud}}\|_1 + \lambda_p \text{CE}(p_{\text{cloud}}, p)
\end{equation}
where $\mathbf{CE}$ denotes cross-entropy, $p = \text{softmax}(\mathbf{o})$, and $\mathbf{o} \in \{\mathbf{o}^{\text{up}}, \mathbf{o}^{\text{lo}}\}$ for respective bounds.

For the lower bound, which represents the worst-case prediction, we enhance the anchoring loss with a DRO strategy. We employ an adversarial reweighting scheme where samples in the batch that are harder to predict (i.e., have a higher loss) are given greater weight:
\begin{equation}
\mathcal{L}_{\text{lo}}^{\text{DRO}} = \max_{\boldsymbol{w} \in \mathbb{S}} \sum_{n=1}^N w_n \text{CE}(p_{\text{cloud}}^{(n)}, p_{\text{lo}}^{(n)}), \quad w_n = \frac{\exp\left(\alpha \text{CE}(p_{\text{cloud}}^{(n)}, p_{\text{lo}}^{(n)})\right)}{\sum_{k=1}^N \exp\left(\alpha \text{CE}(p_{\text{cloud}}^{(k)}, p_{\text{lo}}^{(k)})\right)}
\end{equation}
where $w_n$ is the weight derived from per-sample risk.

Finally, to enforce strict distributional alignment and ensure the Inter-Head produces accurate logits for direct decoding, we apply a stronger constraint on the interval's center prediction, $p_\text{mid}$. Its loss, $\mathcal{L}_c$, includes the anchoring loss plus an explicit KL-divergence term to match the cloud model's distribution:
Distributional alignment is enforced through KL divergence regularization:
\begin{equation}
\mathcal{L}_{\text{align}} = \lambda_\beta D_{\text{KL}}(p_{\text{cloud}} \| p_{\text{mid}})
\end{equation}
The complete Inter-DRO loss integrates these components:
\begin{equation}
\begin{aligned}
    \mathcal{L}_{\text{Inter-DRO}} = 
    &\mathcal{L}_c + \mathcal{L}_u + \mathcal{L}_l \\
    = &\underbrace{\mathcal{L}_{\text{anchor}}(\mathbf{p}_{\text{mid}}) + \mathcal{L}_{\text{align}}}_{\text{Center loss}} + \underbrace{\mathcal{L}_{\text{anchor}}(\mathbf{p}_{\text{up}})}_{\text{Upper bound}} + \underbrace{\mathcal{L}_{\text{anchor}}(\mathbf{p}_{\text{lo}}) + \mathcal{L}_{\text{lo}}^{\text{DRO}}}_{\text{Lower bound DRO}}
\end{aligned}
\end{equation}
This entire training process is fully compatible with techniques such as Classifier-Free Guidance (CFG), which we employ to enhance sample diversity and quality.
\section{Experiments}
In this section, we empirically demonstrate the efficacy of CIAR. We first report the experimental setup. Then we evaluate CIAR with other baselines in perspective of image quality and acceleration. Finally, we conduct an ablation study to clarify the effectiveness of each component in our method.

\subsection{Experimental Setup}
To validate CIAR, we conduct experiments on the text-conditioned LlamaGen-XL Stage I, LlamaGen-XL Stage II, and Anole as the cloud model. On the device side, we adopt the same architecture but restrict it to a single autoregressive layer for efficiency. We use MS-COCO~\citep{lin2014mscoco} validation captions to generate images and compare with ground truth for evaluation. 

For baselines, we consider: (i) \textbf{EAGLE-2}~\citep{li2024eagle2}, which achieves state-of-the-art performance in speculative decoding; (ii) \textbf{Lantern}~\citep{jang2024lantern}, a strong visual speculative decoding method; (iii) \textbf{Entropy-lens}~\citep{ali2025entropy-lens}, which measures token-level uncertainty via entropy of transformer activations; (iv) \textbf{SoftmaxCorr}~\citep{tu2024SoftmaxCorr}, which evaluates confidence using maximum softmax probability; (v) \textbf{Random-based} strategy following the baseline setup in STAR~\citep{zhang2024star}; (vi) \textbf{CoDe}~\citep{chen2025code}, an efficient collaborative acceleration method for VAR. For acceleration, we report speedup, device token acceptance rate, and cloud request rate. For image quality under acceleration, we adopt FID, CLIP score, F1~\citep{heusel2017fid,hessel2021clipscore}, and HPSv2~\citep{wu2023hpsv2}.Additionally, we conduct a quantitative analysis of image generation quality under acceleration. All model training, inference latency tests, and speedup evaluations were conducted on NVIDIA GeForce RTX 4090 GPU (24GB VRAM). 

\begin{table}[t]
\centering
\caption{Main results. Our method significantly reduces latency and cloud requests while maintaining comparable CLIP score, FID, F1, and HPSv2 performance.}



\resizebox{1\textwidth}{!}{
\begin{tabular}{c|c|ccccccc}
\toprule[2pt]
 & {\color[HTML]{333333} } & \multicolumn{7}{c}{\textbf{Metric}} \\ \cmidrule(l){3-9} 
\multirow{-2}{*}{\textbf{Models}} & \multirow{-2}{*}{\textbf{Methods}} & \multicolumn{1}{c}{\textbf{CLIP (↑)}} & \multicolumn{1}{c}{\textbf{FID (↓)}} & \multicolumn{1}{c}{\textbf{F1(↑)}} & \multicolumn{1}{c}{\textbf{HPSv2(↑)}} & \multicolumn{1}{c}{\textbf{Latency(s)}} & \multicolumn{1}{c}{\textbf{steps}} & \textbf{Cloud Call} \\ \midrule \midrule
 & {\color[HTML]{333333} Base} & \multicolumn{1}{c}{0.3161} & \multicolumn{1}{c}{23.6900} & \multicolumn{1}{c}{0.6097} & \multicolumn{1}{c}{22.74} & \multicolumn{1}{c}{x1.00} & \multicolumn{1}{c}{x1.00} & 100.00\% \\
 & {\color[HTML]{333333} Eagle2} & \multicolumn{1}{c}{0.3115} & \multicolumn{1}{c}{25.0700} & \multicolumn{1}{c}{0.6017} & \multicolumn{1}{c}{22.74} & \multicolumn{1}{c}{x1.02} & \multicolumn{1}{c}{x1.16} & 87.02\% \\
 & {\color[HTML]{333333} Lantern} & \multicolumn{1}{c}{0.3159} & \multicolumn{1}{c}{25.5494} & \multicolumn{1}{c}{0.5834} & \multicolumn{1}{c}{21.29} & \multicolumn{1}{c}{x1.66} & \multicolumn{1}{c}{x2.01} & 50.11\% \\
 & {\color[HTML]{333333} Entropy-Lens} & \multicolumn{1}{c}{0.3132} & \multicolumn{1}{c}{24.5828} & \multicolumn{1}{c}{0.5796} & \multicolumn{1}{c}{22.03} & \multicolumn{1}{c}{x1.70} & \multicolumn{1}{c}{x2.05} & 52.34\% \\
 & {\color[HTML]{333333} CoDe (N = 0.3)} & \multicolumn{1}{c}{0.2827} & \multicolumn{1}{c}{35.6670} & \multicolumn{1}{c}{0.4625} & \multicolumn{1}{c}{18.08} & \multicolumn{1}{c}{x2.04} & \multicolumn{1}{c}{x2.734} & 30.00\% \\
\multirow{-6}{*}{LlamaGen(Stage I)} & \cellcolor[HTML]{F2F2FF}{\color[HTML]{333333} \textbf{Ours}} & \multicolumn{1}{c}{\cellcolor[HTML]{F2F2FF}0.3159} & \multicolumn{1}{c}{\cellcolor[HTML]{F2F2FF}24.2459} & \multicolumn{1}{c}{\cellcolor[HTML]{F2F2FF}0.5997} & \multicolumn{1}{c}{\cellcolor[HTML]{F2F2FF}22.48} & \multicolumn{1}{c}{\cellcolor[HTML]{F2F2FF}\textbf{x2.53}} & \multicolumn{1}{c}{\cellcolor[HTML]{F2F2FF}\textbf{x3.00}} & \cellcolor[HTML]{F2F2FF}\textbf{30.44\%} \\ \midrule
 & {\color[HTML]{333333} Base} & \multicolumn{1}{c}{0.2822} & \multicolumn{1}{c}{40.0709} & \multicolumn{1}{c}{0.5350} & \multicolumn{1}{c}{23.84} & \multicolumn{1}{c}{x1.00} & \multicolumn{1}{c}{x1.00} & 100.00\% \\
 & {\color[HTML]{333333} Eagle2} & \multicolumn{1}{c}{0.2925} & \multicolumn{1}{c}{40.9472} & \multicolumn{1}{c}{0.5334} & \multicolumn{1}{c}{23.18} & \multicolumn{1}{c}{x1.03} & \multicolumn{1}{c}{x1.19} & 84.55\% \\
 & {\color[HTML]{333333} Lantern} & \multicolumn{1}{c}{0.2900} & \multicolumn{1}{c}{41.8766} & \multicolumn{1}{c}{0.5364} & \multicolumn{1}{c}{23.06} & \multicolumn{1}{c}{x1.64} & \multicolumn{1}{c}{x1.99} & 50.54\% \\
 & {\color[HTML]{333333} Entropy-Lens} & \multicolumn{1}{c}{0.2869} & \multicolumn{1}{c}{46.7238} & \multicolumn{1}{c}{0.4713} & \multicolumn{1}{c}{22.10} & \multicolumn{1}{c}{x1.91} & \multicolumn{1}{c}{x2.44} & 41.29\% \\
 & {\color[HTML]{333333} CoDe (N = 0.3)} & \multicolumn{1}{c}{0.2712} & \multicolumn{1}{c}{45.6822} & \multicolumn{1}{c}{0.4819} & \multicolumn{1}{c}{20.68} & \multicolumn{1}{c}{x1.86} & \multicolumn{1}{c}{x2.45} & 30.00\% \\
\multirow{-6}{*}{LlamaGen(Stage II)} & \cellcolor[HTML]{F2F2FF}{\color[HTML]{333333} \textbf{Ours}} & \multicolumn{1}{c}{\cellcolor[HTML]{F2F2FF}0.2927} & \multicolumn{1}{c}{\cellcolor[HTML]{F2F2FF}39.3085} & \multicolumn{1}{c}{\cellcolor[HTML]{F2F2FF}0.5458} & \multicolumn{1}{c}{\cellcolor[HTML]{F2F2FF}23.26} & \multicolumn{1}{c}{\cellcolor[HTML]{F2F2FF}\textbf{x2.13}} & \multicolumn{1}{c}{\cellcolor[HTML]{F2F2FF}\textbf{x2.83}} & \cellcolor[HTML]{F2F2FF}\textbf{34.46\%} \\ \midrule
 & {\color[HTML]{333333} Base} & \multicolumn{1}{c}{0.3215} & \multicolumn{1}{c}{19.9455} & \multicolumn{1}{c}{0.6544} & \multicolumn{1}{c}{23.52} & \multicolumn{1}{c}{x1.00} & \multicolumn{1}{c}{x1.00} & 100.00\% \\
 & {\color[HTML]{333333} Eagle2} & \multicolumn{1}{c}{0.3159} & \multicolumn{1}{c}{23.7103} & \multicolumn{1}{c}{0.6117} & \multicolumn{1}{c}{22.88} & \multicolumn{1}{c}{x1.02} & \multicolumn{1}{c}{x1.09} & 91.98\% \\
 & {\color[HTML]{333333} Lantern} & \multicolumn{1}{c}{0.3181} & \multicolumn{1}{c}{23.9510} & \multicolumn{1}{c}{0.5969} & \multicolumn{1}{c}{22.92} & \multicolumn{1}{c}{x1.25} & \multicolumn{1}{c}{x1.81} & 50.35\% \\
 & {\color[HTML]{333333} Entropy-Lens} & \multicolumn{1}{c}{0.2966} & \multicolumn{1}{c}{32.3533} & \multicolumn{1}{c}{0.5600} & \multicolumn{1}{c}{22.34} & \multicolumn{1}{c}{x1.57} & \multicolumn{1}{c}{x2.53} & 39.86\% \\
 & {\color[HTML]{333333} CoDe (N = 0.3)} & \multicolumn{1}{c}{0.2781} & \multicolumn{1}{c}{36.7520} & \multicolumn{1}{c}{0.5597} & \multicolumn{1}{c}{21.94} & \multicolumn{1}{c}{x1.55} & \multicolumn{1}{c}{x2.89} & 30.00\% \\
\multirow{-6}{*}{Anole} & \cellcolor[HTML]{F2F2FF}{\color[HTML]{333333} \textbf{Ours}} & \multicolumn{1}{c}{\cellcolor[HTML]{F2F2FF}0.3171} & \multicolumn{1}{c}{\cellcolor[HTML]{F2F2FF}23.8593} & \multicolumn{1}{c}{\cellcolor[HTML]{F2F2FF}0.5970} & \multicolumn{1}{c}{\cellcolor[HTML]{F2F2FF}23.14} & \multicolumn{1}{c}{\cellcolor[HTML]{F2F2FF}\textbf{x1.87}} & \multicolumn{1}{c}{\cellcolor[HTML]{F2F2FF}\textbf{x3.29}} & \cellcolor[HTML]{F2F2FF}\textbf{29.88\%} \\ \bottomrule[2pt]
\end{tabular}
}
\label{tab:main}
\end{table}
\subsection{Main Results}  
Table~\ref{tab:main} compares CIAR against various baselines. We summarize the key observations as follows:
(1) \textbf{Text SD does not work for visual task.} Applying speculative decoding designed for text yields poor results, providing almost no acceleration while noticeably degrading image quality. This is because image tokens exhibit strong distributional consistency, unlike text tokens, which contain richer information diversity; 
(2) \textbf{Existing visual acceleration methods remain suboptimal.} Lantern mandates universal cloud validation and consequently incurs high computational costs. Similarly, CoDe suffers from severe distribution shift when adapted to the Next-Token Prediction paradigm because the exclusive reliance on the small model for subsequent sequences compromises generation quality.
(3) \textbf{Entropy-based method is inadequate for discrete visual tokens.} Augmenting Lantern with standard entropy thresholds degrades performance because scalar entropy is too coarse to capture spatially heterogeneous and semantically critical ambiguities.
(4) \textbf{CIAR achieves a superior efficiency–quality tradeoff.} Our method reduces inference latency by 3$\times$ versus the base model with only 30\% cloud request frequency and cuts the computational load of Lantern by 60\% while improving CLIP and FID scores. These gains arise from the Inter-Head which produces calibrated probability intervals to drive a selective verification policy. This mechanism allows low-ambiguity regions to be generated locally while precisely offloading high-uncertainty tokens for cloud verification to preserve visual fidelity.

In addition, we conducted a visual analysis, as shown in Figure~\ref{fig:cases}. The cases demonstrate that CIAR preserves fine details and maintains overall object coherence while reducing artifacts and distortions. The right side of the figure compares results with and without the Interval-Enhanced module, showing that collaboration further improves detail consistency and object integrity.

\subsection{Analysis of Device-side Uncertainty Quantification}
\textbf{Effect of Uncertainty Mechanism.} To evaluate the proposed Inter-Head mechanism, we compared it against Random, Entropy-lens, and SoftmaxCorr baselines as detailed in Table~\ref{tab:abl_uncer}. The Random strategy significantly degrades FID while Entropy-lens fails to capture subtle visual differences among tokens. Similarly, SoftmaxCorr results in a sharp quality drop because relying on a single token probability ignores the broader distributional uncertainty. In contrast, our Inter-Head assesses uncertainty from the entire distribution and achieves a superior balance between cloud verification frequency and image fidelity.

\textbf{Effect of Device Model Capacity.} We further investigate the impact of device capacity including \textit{Inter-Head capacity} and \textit{device model depth} with detailed results provided in \ref{appendix:inter_head} and \ref{appendix:device_layers}. Experiments on Inter-Head capacity using voting~\citep{daliparthi2024voting} and ensemble~\citep{duan2025ensemble} strategies show that increasing the head count to two or three improves image quality with negligible computational overhead, while adding further heads yields diminishing returns. Regarding model depth, we evaluated device models with varying layer counts. The results indicate that deeper device models enhance image quality and significantly reduce cloud interaction frequency, although this improvement comes at the cost of a slight increase in total inference latency.

\begin{table}[t]
\centering
\caption{Performance comparison of different uncertainty method.}


\resizebox{1\textwidth}{!}{
\begin{tabular}{c|c|ccccccc}
\toprule[2pt]
 &  & \multicolumn{7}{c}{\textbf{Metric}} \\ \cmidrule{3-9} 
\multirow{-2}{*}{\textbf{Models}} & \multirow{-2}{*}{\textbf{Methods}} & \multicolumn{1}{c}{\textbf{CLIP ($\uparrow$)}} & \multicolumn{1}{c}{\textbf{FID ($\downarrow$)}} & \multicolumn{1}{c}{\textbf{F1($\uparrow$)}} & \multicolumn{1}{c}{\textbf{HPSv2($\uparrow$)}} & \multicolumn{1}{c}{\textbf{Speedup}} & \multicolumn{1}{c}{\textbf{Steps}} & \textbf{Cloud Call} \\ \midrule \midrule
 & Base & \multicolumn{1}{c}{0.3161} & \multicolumn{1}{c}{23.6900} & \multicolumn{1}{c}{0.6097} & \multicolumn{1}{c}{22.74} & \multicolumn{1}{c}{x1.00} & \multicolumn{1}{c}{x1.00} & 100.00\% \\
 & Random & \multicolumn{1}{c}{0.3142} & \multicolumn{1}{c}{30.1872} & \multicolumn{1}{c}{0.5369} & \multicolumn{1}{c}{18.16} & \multicolumn{1}{c}{x2.28} & \multicolumn{1}{c}{x2.36} & 36.46\% \\
 & Entropy-Lens & \multicolumn{1}{c}{0.3132} & \multicolumn{1}{c}{24.5828} & \multicolumn{1}{c}{0.5796} & \multicolumn{1}{c}{22.03} & \multicolumn{1}{c}{x1.70} & \multicolumn{1}{c}{x2.05} & 52.34\% \\
 & SoftmaxCorr & \multicolumn{1}{c}{0.3149} & \multicolumn{1}{c}{31.1009} & \multicolumn{1}{c}{0.5130} & \multicolumn{1}{c}{19.11} & \multicolumn{1}{c}{x2.27} & \multicolumn{1}{c}{x2.31} & 36.49\% \\
\multirow{-5}{*}{LlamaGen(Stage I)} & \cellcolor[HTML]{F2F2FF}\textbf{Ours} & \multicolumn{1}{c}{\cellcolor[HTML]{F2F2FF}0.3159} & \multicolumn{1}{c}{\cellcolor[HTML]{F2F2FF}24.2459} & \multicolumn{1}{c}{\cellcolor[HTML]{F2F2FF}0.5997} & \multicolumn{1}{c}{\cellcolor[HTML]{F2F2FF}22.48} & \multicolumn{1}{c}{\cellcolor[HTML]{F2F2FF}\textbf{x2.53}} & \multicolumn{1}{c}{\cellcolor[HTML]{F2F2FF}\textbf{x3.00}} & \cellcolor[HTML]{F2F2FF}\textbf{30.44\%} \\ \midrule
 & Base & \multicolumn{1}{c}{0.2822} & \multicolumn{1}{c}{40.0709} & \multicolumn{1}{c}{0.5350} & \multicolumn{1}{c}{23.84} & \multicolumn{1}{c}{x1.00} & \multicolumn{1}{c}{x1.00} & 100.00\% \\
 & Random & \multicolumn{1}{c}{0.2876} & \multicolumn{1}{c}{44.9589} & \multicolumn{1}{c}{0.4629} & \multicolumn{1}{c}{20.76} & \multicolumn{1}{c}{x2.09} & \multicolumn{1}{c}{x2.48} & 34.58\% \\
 & Entropy-Lens & \multicolumn{1}{c}{0.2869} & \multicolumn{1}{c}{46.7238} & \multicolumn{1}{c}{0.4713} & \multicolumn{1}{c}{22.10} & \multicolumn{1}{c}{x1.91} & \multicolumn{1}{c}{x2.44} & 41.29\% \\
 & SoftmaxCorr & \multicolumn{1}{c}{0.2825} & \multicolumn{1}{c}{50.0306} & \multicolumn{1}{c}{0.4030} & \multicolumn{1}{c}{19.54} & \multicolumn{1}{c}{x1.91} & \multicolumn{1}{c}{x2.40} & 36.30\% \\
\multirow{-5}{*}{LlamaGen(Stage II)} & \cellcolor[HTML]{F2F2FF}\textbf{Ours} & \multicolumn{1}{c}{\cellcolor[HTML]{F2F2FF}0.2927} & \multicolumn{1}{c}{\cellcolor[HTML]{F2F2FF}39.3085} & \multicolumn{1}{c}{\cellcolor[HTML]{F2F2FF}0.5458} & \multicolumn{1}{c}{\cellcolor[HTML]{F2F2FF}23.26} & \multicolumn{1}{c}{\cellcolor[HTML]{F2F2FF}\textbf{x2.13}} & \multicolumn{1}{c}{\cellcolor[HTML]{F2F2FF}\textbf{x2.83}} & \cellcolor[HTML]{F2F2FF}\textbf{34.46\%} \\ \midrule
 & Base & \multicolumn{1}{c}{0.3215} & \multicolumn{1}{c}{19.9455} & \multicolumn{1}{c}{0.6544} & \multicolumn{1}{c}{23.52} & \multicolumn{1}{c}{x1.00} & \multicolumn{1}{c}{x1.00} & 100.00\% \\
 & Random & \multicolumn{1}{c}{0.2966} & \multicolumn{1}{c}{52.3533} & \multicolumn{1}{c}{0.3464} & \multicolumn{1}{c}{20.55} & \multicolumn{1}{c}{x1.53} & \multicolumn{1}{c}{x2.44} & 39.53\% \\
 & Entropy-Lens & \multicolumn{1}{c}{0.2966} & \multicolumn{1}{c}{32.3533} & \multicolumn{1}{c}{0.5600} & \multicolumn{1}{c}{22.34} & \multicolumn{1}{c}{x1.57} & \multicolumn{1}{c}{x2.53} & 39.86\% \\
 & SoftmaxCorr & \multicolumn{1}{c}{0.2966} & \multicolumn{1}{c}{51.0382} & \multicolumn{1}{c}{0.3447} & \multicolumn{1}{c}{20.96} & \multicolumn{1}{c}{x1.48} & \multicolumn{1}{c}{x2.30} & 41.04\% \\
\multirow{-5}{*}{Anole} & \cellcolor[HTML]{F2F2FF}\textbf{Ours} & \multicolumn{1}{c}{\cellcolor[HTML]{F2F2FF}0.3171} & \multicolumn{1}{c}{\cellcolor[HTML]{F2F2FF}23.8593} & \multicolumn{1}{c}{\cellcolor[HTML]{F2F2FF}0.5970} & \multicolumn{1}{c}{\cellcolor[HTML]{F2F2FF}23.14} & \multicolumn{1}{c}{\cellcolor[HTML]{F2F2FF}\textbf{x1.87}} & \multicolumn{1}{c}{\cellcolor[HTML]{F2F2FF}\textbf{x3.29}} & \cellcolor[HTML]{F2F2FF}\textbf{29.88\%} \\ 
\bottomrule[2pt]
\end{tabular}
}
\label{tab:abl_uncer}
\end{table}
\begin{figure}[t]
    \centering
    \includegraphics[width=1\textwidth]{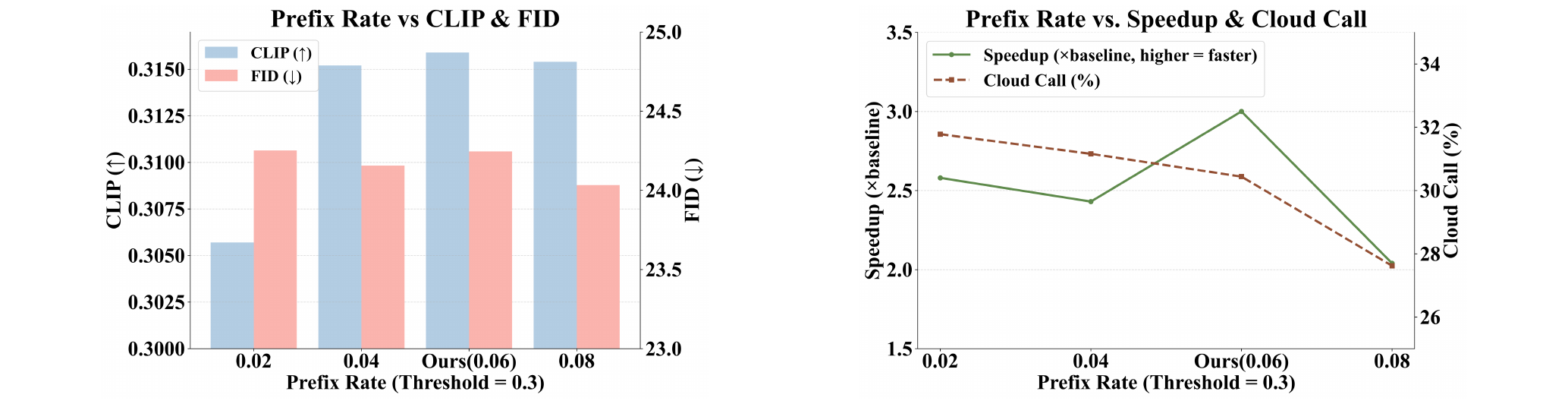}    
    \caption{Comparison of different prefix rate}
    \label{fig:prefix}
\end{figure}

\subsection{Analysis of Cloud-enhanced Decoding}
\textbf{Effect of Prefix Rate.} We inject cloud-generated prefixes as distributional anchors to mitigate local generation drift. Figure~\ref{fig:prefix} indicates that higher prefix rates steadily improve image quality and reduce the frequency of cloud verification requests. However, the inference speedup exhibits non-monotonic behavior because the computational cost of generating initial cloud tokens offsets the benefits of reduced verification at very high rates. A rate of 0.06 strikes an optimal balance between guidance and overhead whereas higher rates compromise latency. Visual comparisons in Figure~\ref{fig:cases} further confirm that our method yields superior consistency and realistic details.

\textbf{Effect of Uncertainty Threshold.} We further analyze the uncertainty threshold with full results in \ref{appendix:thresh}. Increasing the threshold enhances speed by accepting more local tokens but risks quality degradation. We identify 0.30 as the optimal operating point which provides a significant speedup over conservative settings with negligible perceptual loss.

\subsection{Analysis of Continuous-based Uncertainty}
Our continuous-based uncertainty estimation circumvents the exponential latency scaling inherent in discrete methods that require solution enumeration. As illustrated in Table~\ref{tab:discrete} and Figure~\ref{fig:latency_comparison}, discrete approaches incur prohibitive computational costs as the codebook size $k$ increases while yielding only marginal quality gains. In contrast, our method achieves significantly lower latency than the discrete counterpart at $k=100$ while delivering superior CLIP and FID scores. These results confirm that continuous intervals provide a highly efficient estimation mechanism that ensures high-quality generation without the overhead of enumerating feasible solutions. Additional qualitative examples are detailed in \ref{appendix:discrete}.

\begin{figure}[t]
    \centering
    \includegraphics[width=1\textwidth]{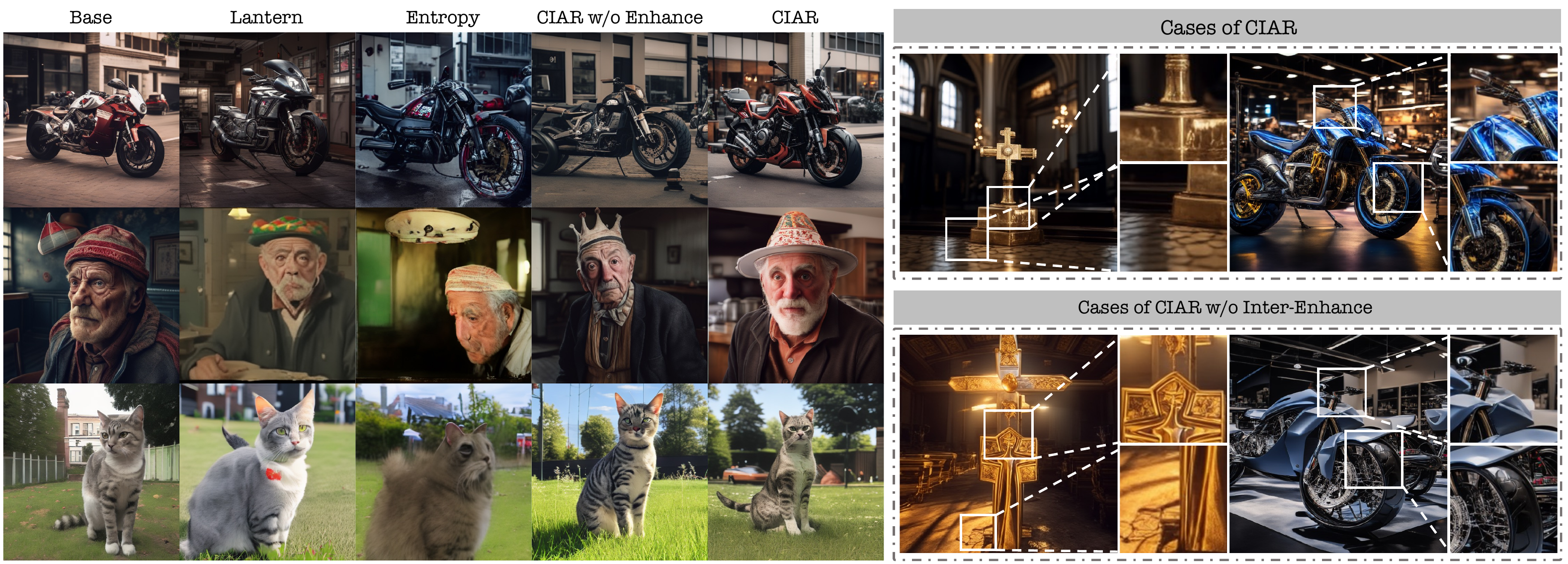}
    \caption{Visual analysis of different methods. ``CIAR w/o Inter-Enhance" denotes our method without interval-enhanced decoding using interval features.}
    \label{fig:cases}
\end{figure}

\begin{figure}[t]
\centering
\begin{minipage}[t]{0.48\linewidth}
    \centering
    \vspace{0pt}   
    \includegraphics[width=0.75\linewidth]{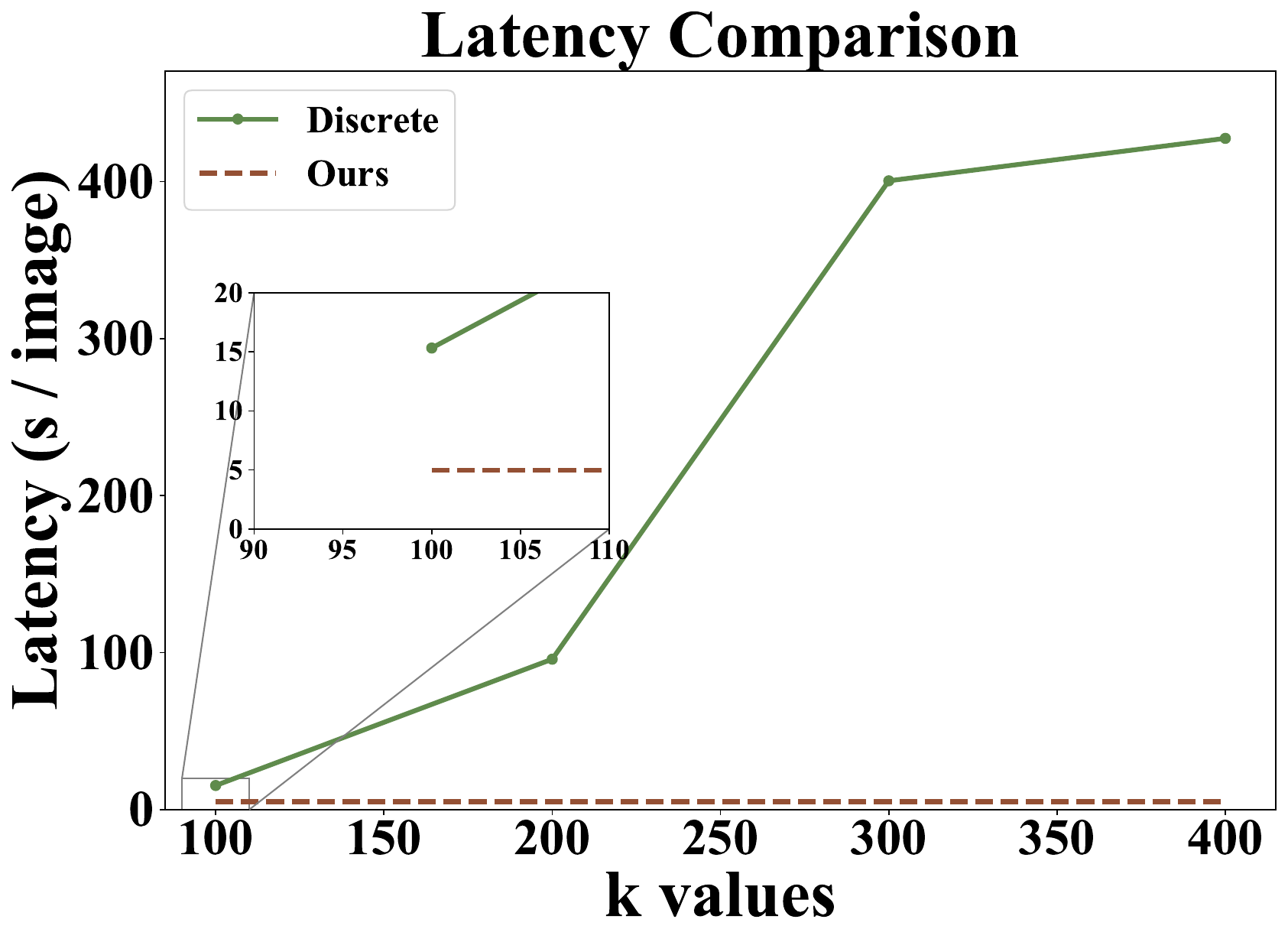}
    \caption{Latency comparison between Discrete and Continuous methods.}
    \label{fig:latency_comparison}
\end{minipage}%
\hfill
\begin{minipage}[t]{0.48\linewidth}
    \centering
    \vspace{0pt}   
    \captionof{table}{Performance comparison between Discrete method and Continuous method.}
    \label{tab:discrete}
    \resizebox{0.8\linewidth}{!}{%
    \begin{tabular}{c|c|cc}
    \toprule[2pt]
    \multirow{2}{*}{\textbf{Methods}} & \multirow{2}{*}{\textbf{Settings}} & \multicolumn{2}{c}{\textbf{Metric}} \\ \cmidrule(lr){3-4}
                                      &                                    & \textbf{CLIP (↑)} & \textbf{FID (↓)} \\ \midrule \midrule
    \multirow{4}{*}{Discrete} & k = 100 & 0.3081 & 26.0371 \\
             & k = 200 & 0.3096 & 25.5508 \\
             & k = 300 & 0.3123 & 24.8237 \\
             & k = 400 & 0.3091 & 24.9401 \\ \midrule
    Continuous & \cellcolor[HTML]{F2F2FF}Ours & \cellcolor[HTML]{F2F2FF}\textbf{0.3176} & \cellcolor[HTML]{F2F2FF}\textbf{24.2463} \\
    \bottomrule[2pt]
    \end{tabular}%
    }
\end{minipage}
\end{figure}

\section{Conclusion}
We introduced CIAR, a collaborative cloud-device framework for efficient autoregressive visual generation. At its core, CIAR introduces an interval-based uncertainty quantifier, Inter-Head, to enable on-device self-verification, reducing cloud computation. To maintain distributional consistency between cloud and device, CIAR leverages Cloud-Enhanced Decoding with interval-feature conditioning and prefix guidance, alongside an Inter-DRO loss for output alignment. Experiments demonstrate that CIAR achieves substantial speed-ups while preserving visual quality, enabling practical AR visual model deployment on devices and enhancing real-world multimodal user experiences.

\newpage
\section*{Acknowledgments}
National Natural Science Foundation of China (No. U24A20326, 62402429, 62441236).
Supported by the Key Research and Development Program of Zhejiang Province (No. 2025C01026).
Ningbo Yongjiang Talent Introduction Programme (2023A-397-G).
Young Elite Scientists Sponsorship Program by CAST (2024QNRC001).
The author gratefully acknowledges the support of Zhejiang University Education Foundation Qizhen Scholar Foundation.
Sponsored by CIPS-LMG Huawei Innovation Fund.

\section*{Ethics Statement}

Our work on CIAR, a collaborative cloud-device framework for efficient autoregressive image generation, adheres to the ICLR Code of Ethics. The primary goal of CIAR is to enhance the reliability and transparency of visual synthesis systems through interval-based uncertainty quantification, which can benefit high-stakes applications such as medical imaging and assistive tools by reducing overconfidence and errors. In this study, no human subjects or animal experimentation was involved. All datasets used were sourced in compliance with relevant usage guidelines, ensuring no violation of privacy or security protocols. We have taken proactive measures to avoid biases in model training and evaluation, and no personally identifiable information was processed. While generative technologies like CIAR could potentially be misused for malicious purposes, such as creating deepfakes, we conducted this research with a commitment to responsible innovation, advocating for the development of safeguards like detection mechanisms. We maintain full transparency and integrity throughout the research process, aligning with ethical AI principles.

\section*{Reproducibility Statement}
To ensure the reproducibility of our work, we will open-source the implementation of the CIAR method. The complete code, along with detailed documentation, is included in the supplementary materials submitted with this paper. Additionally, upon acceptance, the code will be made publicly available on GitHub to facilitate further research and application.

\bibliography{iclr2026_conference}
\bibliographystyle{iclr2026_conference}

\newpage

\appendix
\section{Appendix}

\subsection{Pseudo Code}
\renewcommand{\algorithmicrequire}{\textbf{Input:}}
\renewcommand{\algorithmicensure}{\textbf{Output:}}

\begin{algorithm}[!h]
\caption{CIAR: Collaborative Interval-based AutoRegressive Decoding}
\label{alg:ciar}

\begin{algorithmic}[1]
\Require 
\Statex $\mathcal{M}_{\text{cloud}}$: Cloud autoregressive model
\Statex $\mathcal{M}_{\text{device}}$: Device autoregressive model
\Statex $\mathcal{H}_{\text{inter}}$: Interval-Head module
\Statex $\rho$: Prefix rate
\Statex $\tau$: Uncertainty threshold
\Statex $L$: Total sequence length
\Statex $K$: Device sequence length
\Statex $\text{prompt}$: Input conditioning

\Ensure Generated token sequence $\mathbf{x}_{0:L-1}$

\Statex \setcounter{ALG@line}{0}   
\State \textbf{Initialize:} $t \gets 0$, $\mathbf{x} \gets \emptyset$, $\mathbf{prefix} \gets \emptyset$
\State $m \gets \lfloor \rho \cdot L \rfloor$ 

\While{$t < m$} \Comment{Compute prefix length}
    \State $x_t \gets \mathcal{M}_{\text{cloud}}(\text{prompt}, \mathbf{x}_{0:t-1})$ \Comment{Auto-regressive generation}
    \State $\mathbf{x} \gets \mathbf{x} \cup \{x_t\}$, $\mathbf{prefix} \gets \mathbf{prefix} \cup \{x_t\}$
    \State $t \gets t + 1$
\EndWhile

\While{$t < L$} \Comment{Device-side generation with verification}
    \State $h_t \gets \mathcal{M}_{\text{device}}(\mathbf{prefix}, \mathbf{x}_{m:t-1})$ \Comment{Device prediction}
    \State $\mathbf{c}_t, \mathbf{r}_t \gets {\textbf{Inter-Head}}(h_t)$ \Comment{Interval center/radius}
    \State $\mathcal{P}_t \gets \textbf{InterFuse}(c_t, r_t)$
    \State $\mathcal{U}_t \gets \mathcal{U}(\mathcal{P}_t)$ \Comment{Eq.~\ref{eq:uncertainty_score}}
    
    \If{$\mathcal{U}_t \leq \tau$}
        \State Accept $x_t$ \Comment{On-device Self-verification}
        \State $t \gets t + 1$
    \Else
        \State $\mathbf{buffer} \gets \{x_t\}$, $k \gets 1$
        \While{$k < K \land t+k < L$}
            \State $f_{t+k}, \mathcal{P}_{t+k} \gets \mathcal{M}_{\text{device}}(x_{0:t+k-1})$
            \State $\mathbf{buffer} \gets \mathbf{buffer} \cup (f_{t+k}, \mathcal{P}_{t+k})$
            \State $k \gets k + 1$
        \EndWhile
        
        \State $f^I \gets \phi\left(\textbf{Concat}\left(\mathbf{buffer}\right)\right)$ \Comment{Eq.~\ref{eq:interval_feature}}
        \State $\mathbf{x}_{\text{verified}} \gets \mathcal{M}_{\text{cloud}}(\mathbf{x}_{0:t+K})$ \Comment{Cloud verification}
        \State $\mathbf{x} \gets \mathbf{x} \cup \mathbf{x}_{\text{verified}}$
        \State $x_{sample} \gets \mathcal{M}_{cloud}(\textbf{x}, f^I)$
        \State $t \gets t + |\mathbf{x}_{\text{verified}}| + 1$
    \EndIf
\EndWhile

\Return $\mathbf{x}_{0:L-1}$
\end{algorithmic}
\end{algorithm}

Algorithm~\ref{alg:ciar} illustrates the complete inference process of the CIAR framework. Initially, the cloud model $M_{cloud}$ accepts an input prompt and generates a prefix sequence of image tokens. This prefix is then transferred to the device model $M_{device}$ , which performs autoregressive generation. Each token produced by the device model is processed by the Inter-Head module to compute its uncertainty score. If the score surpasses a threshold $\tau$, the system collects a subsequent sequence of $K$ tokens and forwards them to the cloud for verification and resampling. During this step, the interval features derived from the tokens are also transmitted to the cloud as supplementary information. The autoregressive generation terminates once the total token sequence length reaches $L$, at which point image decoding begins.

\subsection{In-depth Analysis}

\subsubsection{Analysis of different thresholds}
\label{appendix:thresh}
We investigate the impact of the uncertainty threshold on generation quality and efficiency with a fixed prefix rate of 0.06. Figure~\ref{fig:threshold} presents the corresponding CLIP, FID, speedup, and cloud call rate. Increasing the threshold allows more tokens to be accepted locally, which reduces cloud requests and improves inference speed. At a threshold of 0.30, the method achieves a CLIP score of 0.3159 and an FID of 24.2459, representing negligible perceptual differences compared to the more conservative threshold of 0.25, which yields 0.3154 and 24.2130. However, this slight adjustment brings a meaningful 15\% relative speedup from 2.20× to 2.53× and lowers the cloud call rate from 30.98\% to 30.44\%. Further increasing the threshold beyond 0.30 leads to greater acceleration but significant degradation in visual quality. At 0.35, CLIP drops to 0.3108 and FID rises to 27.1990; at 0.40, CLIP falls to 0.3048 and FID worsens to 28.5853. These results show that blindly raising the threshold sacrifices
image quality for speed.  We select 0.30 as it offers a favorable balance between acceleration and perceptual quality.

\begin{figure}[h]
    \centering
    \includegraphics[width=1\textwidth]{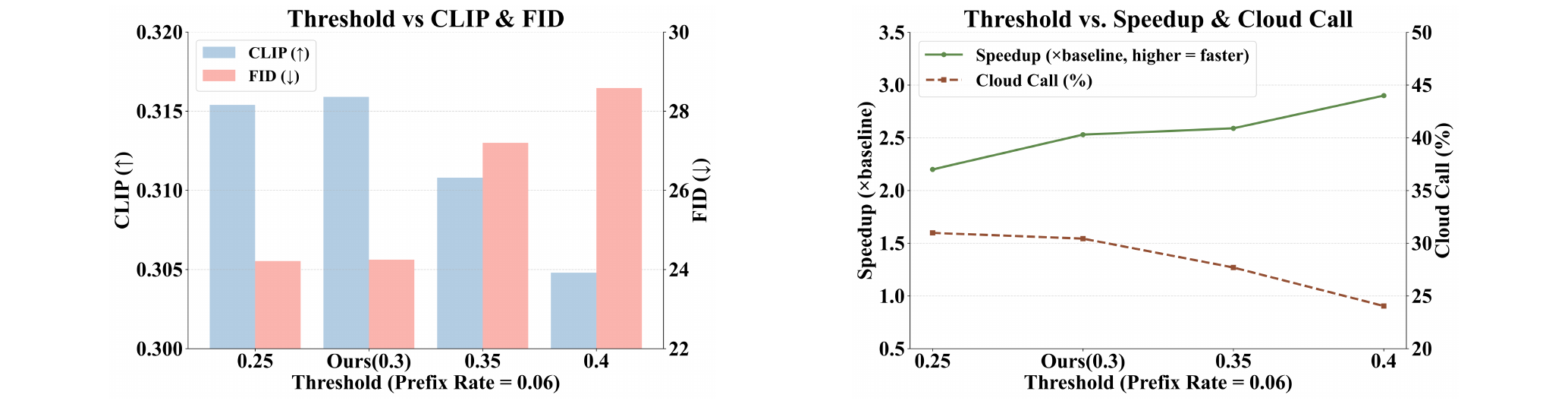}
    \caption{Comparison of different threshold}
    \label{fig:threshold}
\end{figure}

\subsubsection{Discrete vs. Continuous}
\label{appendix:discrete}
\begin{figure}[h]
    \centering
    \includegraphics[width=1\textwidth]{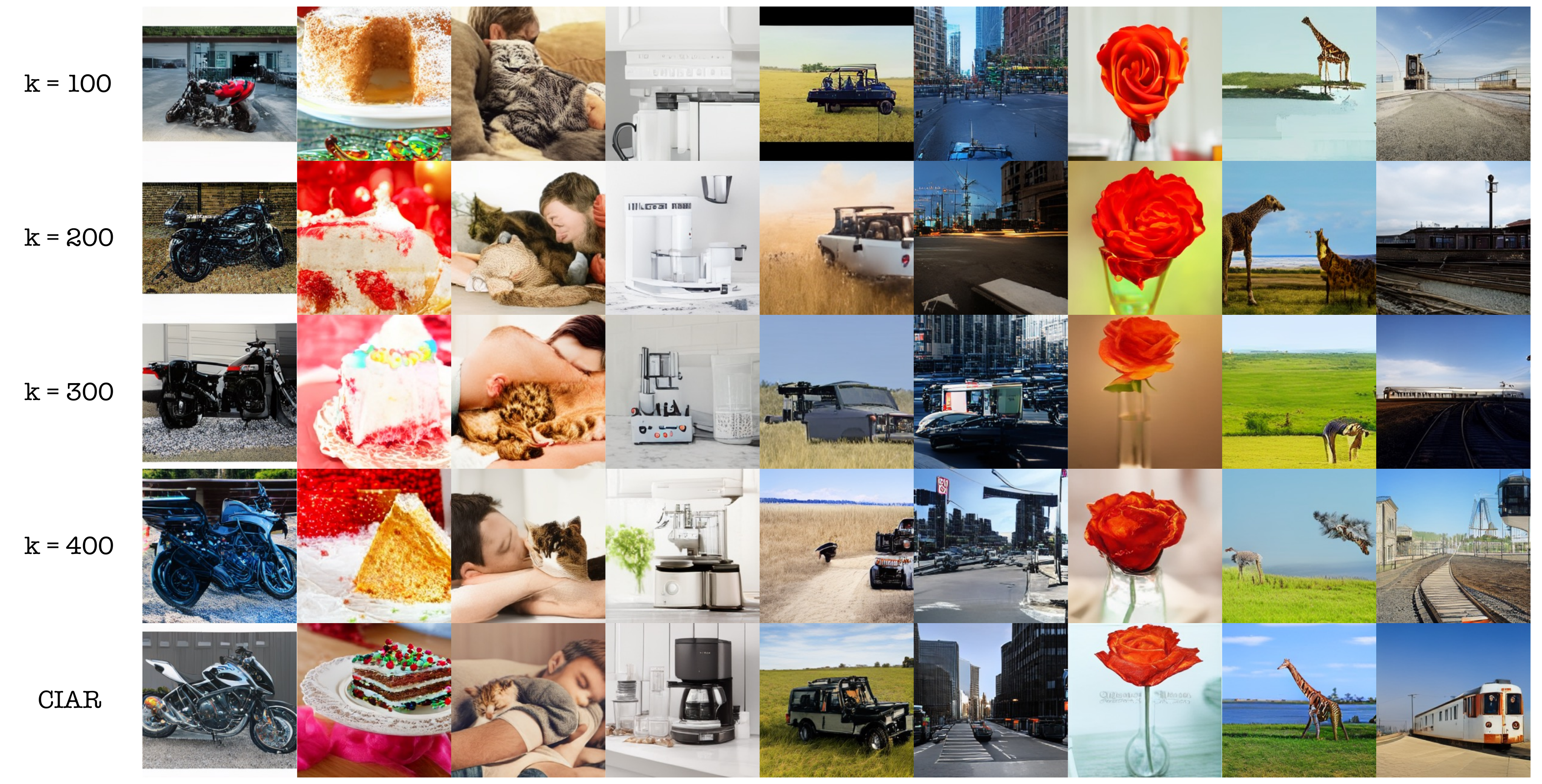}
    \caption{Visual analysis of Discrete metric and Continuous metric}
    \label{fig:discrete_cases}
\end{figure}

Figure~\ref{fig:discrete_cases} presents image samples generated by the discrete uncertainty quantification method under varying k values. We evaluated four monotonically increasing k values and conducted a comparative analysis with the CIAR method using identical prompts and random seeds. The results indicate that at k=100, while latency is reduced, the limited token count for uncertainty computation leads to imprecise uncertainty estimates, resulting in the lowest image quality. As k increases, image details become more refined, yet significant distortions and artifacts remain apparent. In comparison, the CIAR method achieves markedly superior image quality, with more accurate rendering of object details. This outcome underscores the ability of CIAR to effectively minimize latency while maintaining precise uncertainty quantification, thereby ensuring high image fidelity.

\subsubsection{Comparison with CoDe in NTP Paradigm}
\label{appendix:code_comparison}

\begin{table}[t]
\centering
\caption{Impact of different $N$ values on CoDe and comparison with CIAR.}
\resizebox{1\textwidth}{!}{
\begin{tabular}{c|c|c|cccccc}
\toprule[2pt]
 &  &  & \multicolumn{6}{c}{\textbf{Metric}} \\ \cmidrule(l){4-9} 
\multirow{-2}{*}{\textbf{Models}} & \multirow{-2}{*}{\textbf{Method}} & \multirow{-2}{*}{\textbf{Param. $N$}} & \textbf{CLIP ($\uparrow$)} & \textbf{FID ($\downarrow$)} & \textbf{F1($\uparrow$)} & \textbf{HPSv2($\uparrow$)} & \textbf{Latency(s)} & \textbf{Cloud Call} \\ \midrule \midrule
 & Base & - & 0.3161 & 23.6900 & 0.6097 & 22.74 & x1.00 & 100.00\% \\ \cmidrule{2-9}
 &  & 0.10 & 0.2566 & 44.7305 & 0.4028 & 15.84 & \textbf{x2.71} & 10.00\% \\
 &  & 0.15 & 0.2620 & 42.6082 & 0.4088 & 16.36 & x2.41 & 15.00\% \\
 &  & 0.20 & 0.2627 & 40.3943 & 0.4508 & 16.83 & x2.14 & 20.00\% \\
 &  & 0.25 & 0.2783 & 38.3103 & 0.4639 & 17.43 & x2.13 & 25.00\% \\
 & \multirow{-5}{*}{CoDe} & 0.30 & 0.2827 & 35.6670 & 0.4625 & 18.08 & x2.04 & 30.00\% \\ \cmidrule{2-9}
\multirow{-7}{*}{LlamaGen(Stage I)} & \cellcolor[HTML]{F2F2FF}\textbf{Ours} & \cellcolor[HTML]{F2F2FF}- & \cellcolor[HTML]{F2F2FF}0.3159 & \cellcolor[HTML]{F2F2FF}24.2459 & \cellcolor[HTML]{F2F2FF}0.5997 & \cellcolor[HTML]{F2F2FF}22.48 & \cellcolor[HTML]{F2F2FF}\underline{x2.53} & \cellcolor[HTML]{F2F2FF}30.44\% \\ \bottomrule[2pt]
\end{tabular}
}
\label{tab:ablation_code}
\end{table}
CoDe~\citep{chen2025code} is a collaborative acceleration framework originally designed for the Next-Scale-Prediction (NSP) paradigm, assigning semantic-heavy early scales to the large model and subsequent scales to a smaller model. To adapt CoDe for LlamaGen (which follows the Next-Token-Prediction, NTP paradigm), we define $N$ as the ratio of the sequence predicted by the large cloud model. Table~\ref{tab:ablation_code} presents the impact of $N$ on performance. 

\textbf{Analysis.} While CoDe provides acceleration, it consistently underperforms CIAR in generation quality. This degradation stems from the fundamental difference between the paradigms: NSP predicts holistic image features where early scales provide a robust global context, whereas NTP relies on strict token-wise autoregression. When the device model (small model) predicts long consecutive sequences (low $N$), the lack of high-capacity correction leads to severe \textit{distribution shift} and error accumulation. Increasing $N$ mitigates quality loss by involving the cloud model more frequently but significantly compromises the acceleration gain. Consequently, CoDe is suboptimal for NTP-based autoregressive models compared to CIAR's dynamic, uncertainty-aware allocation.

\subsubsection{Impact of Device Model Layers}
\label{appendix:device_layers}

\begin{table}[t]
\centering
\caption{Performance comparison of device models with varying depths.}
\resizebox{1\textwidth}{!}{
\begin{tabular}{c|l|ccccccc}
\toprule[2pt]
 &  & \multicolumn{7}{c}{\textbf{Metric}} \\ \cmidrule(l){3-9} 
\multirow{-2}{*}{\textbf{Models}} & \multicolumn{1}{c|}{\multirow{-2}{*}{\textbf{\#Layers}}} & \textbf{CLIP ($\uparrow$)} & \textbf{FID ($\downarrow$)} & \textbf{F1($\uparrow$)} & \textbf{HPSv2($\uparrow$)} & \textbf{Latency(s)} & \textbf{Steps} & \textbf{Cloud Call} \\ \midrule \midrule
 & 1 Layer & \underline{0.3159} & 24.2459 & 0.5997 & \textbf{22.48} & \underline{x2.53} & x3.00 & 30.44\% \\
 & 2 Layers & 0.3113 & 23.8820 & \textbf{0.6270} & 22.44 & \textbf{x2.54} & x3.44 & 27.06\% \\
 & 3 Layers & \textbf{0.3171} & \underline{23.8404} & 0.5899 & 22.44 & x2.19 & \underline{x3.58} & \underline{26.10\%} \\
\multirow{-4}{*}{LlamaGen(Stage I)} & 5 Layers & \textbf{0.3171} & \textbf{23.8232} & \underline{0.6180} & \underline{22.45} & x2.14 & \textbf{x3.66} & \textbf{25.68\%} \\ \bottomrule[2pt]
\end{tabular}
}
\label{tab:ablation_layers}
\end{table}
We investigate the trade-off between the device model's capacity and overall efficiency by varying the device model depth (2, 3, and 5 layers). Results are summarized in Table~\ref{tab:ablation_layers}.

\textbf{Analysis.} Increasing the network depth significantly enhances the device model's representational capability, resulting in improved FID scores. Crucially, a stronger device model generates tokens with higher confidence, thereby reducing the frequency of cloud model intervention (lower cloud call rate). However, this comes at the cost of increased computational latency per step on the device. The results suggest that slightly increasing the device model depth can be a viable strategy to balance generation quality and acceleration, depending on the specific hardware constraints.

\subsubsection{Impact of Inter-Head Capacity}
\label{appendix:inter_head}

\begin{table}[t]
\centering
\caption{Impact of Inter-Head capacity with Ensemble and Voting aggregation mechanisms.}
\resizebox{1\textwidth}{!}{
\begin{tabular}{c|c|ccccccc}
\toprule[2pt]
 &  & \multicolumn{7}{c}{\textbf{Metric}} \\ \cmidrule(l){3-9} 
\multirow{-2}{*}{\textbf{Method}} & \multirow{-2}{*}{\textbf{\#Head}} & \textbf{CLIP ($\uparrow$)} & \textbf{FID ($\downarrow$)} & \textbf{F1($\uparrow$)} & \textbf{HPSv2($\uparrow$)} & \textbf{Latency(s)} & \textbf{Steps} & \textbf{Cloud Call} \\ \midrule \midrule
 & 1 Head & 0.3159 & 24.2459 & 0.5997 & 22.48 & x2.53 & x3.00 & 30.44\% \\
 & 2 Heads & 0.3157 & 24.1517 & 0.5878 & 22.38 & x2.46 & x2.98 & 29.45\% \\
 & 3 Heads & 0.3151 & 24.1301 & 0.6099 & 22.30 & x2.45 & x2.97 & 28.86\% \\
 & 4 Heads & 0.3137 & 24.6195 & 0.5999 & 22.26 & x2.49 & x2.97 & 28.73\% \\
\multirow{-5}{*}{Ensemble} & 5 Heads & 0.3147 & 24.6723 & 0.5988 & 22.26 & x2.50 & x3.00 & 28.38\% \\ \midrule
 & 1 Head & 0.3159 & 24.2459 & 0.5997 & 22.48 & x2.53 & x3.00 & 30.44\% \\
 & 2 Heads & 0.3167 & 24.0400 & 0.6095 & 22.34 & x2.29 & x2.98 & 29.38\% \\
 & 3 Heads & 0.3165 & 24.1466 & 0.6195 & 22.25 & x2.48 & x3.00 & 28.97\% \\
 & 4 Heads & 0.3169 & 24.3542 & 0.5678 & 22.25 & x2.53 & x2.98 & 28.56\% \\
\multirow{-5}{*}{Voting} & 5 Heads & 0.3135 & 24.3353 & 0.5910 & 22.22 & x2.48 & x2.98 & 28.54\% \\ \bottomrule[2pt]
\end{tabular}
}
\label{tab:ablation_head_strategy}
\end{table}

The Inter-Head is pivotal for uncertainty estimation. We examine the impact of its capacity by training 5 distinct Inter-Heads under different random seeds and aggregating them using two strategies:
\begin{itemize}
    \item \textbf{Voting~\citep{daliparthi2024voting}:} A hard-voting mechanism where the final output is determined by the majority class prediction of the heads, prioritizing robustness.
    \item \textbf{Ensemble~\citep{duan2025ensemble}:} A soft-voting mechanism that averages the probability distributions from multiple heads to obtain a calibrated score.
\end{itemize}

\textbf{Analysis.} As shown in Table~\ref{tab:ablation_head_strategy}, increasing the number of heads has a negligible impact on latency due to the lightweight nature of the Inter-Head. Performance in terms of FID peaks when using 2 or 3 heads. However, increasing the count to 5 leads to a degradation in CLIP, FID, and HPSv2 scores across both strategies. This indicates that excessive capacity in the uncertainty estimation module does not yield better calibration and may lead to overfitting or noise accumulation.

\subsubsection{Impact of the Inter-DRO Loss}

\begin{table}[t]
\centering
\caption{Ablation study on the Inter-DRO loss.}
\resizebox{1\textwidth}{!}{
\begin{tabular}{c|l|ccccccc}
\toprule[2pt]
 &  & \multicolumn{7}{c}{\textbf{Metric}} \\ \cmidrule(l){3-9} 
\multirow{-2}{*}{\textbf{Models}} & \multirow{-2}{*}{\textbf{Loss}} & \textbf{CLIP ($\uparrow$)} & \textbf{FID ($\downarrow$)} & \textbf{F1($\uparrow$)} & \textbf{HPSv2($\uparrow$)} & \textbf{Latency(s)} & \textbf{Steps} & \textbf{Cloud Call} \\ \midrule \midrule
 & w/o $L_c$ & 0.3149 & 25.2721 & 0.5889 & 21.92 & x2.37 & x2.98 & 28.05\% \\
 & w/o $L_u$ & 0.2891 & 47.8981 & 0.4110 & 16.65 & x2.39 & x2.99 & 32.17\% \\
 & w/o $L_1$ & 0.2945 & 25.2849 & 0.5889 & 22.45 & x2.37 & x2.98 & 28.05\% \\
 & w/o $L_\text{anchor}$ & 0.2168 & 68.6400 & 0.4114 & 15.78 & x2.30 & x2.85 & 33.79\% \\
\multirow{-5}{*}{LlamaGen(Stage I)} & \cellcolor[HTML]{F2F2FF}\textbf{Ours} & \cellcolor[HTML]{F2F2FF}0.3159 & \cellcolor[HTML]{F2F2FF}24.2459 & \cellcolor[HTML]{F2F2FF}0.5997 & \cellcolor[HTML]{F2F2FF}22.48 & \cellcolor[HTML]{F2F2FF}\textbf{x2.53} & \cellcolor[HTML]{F2F2FF}\textbf{x3.00} & \cellcolor[HTML]{F2F2FF}30.44\% \\ \bottomrule[2pt]
\end{tabular}
}
\label{tab:ablation_loss}
\end{table}

We conduct a comprehensive ablation study to isolate the contributions of each Inter-DRO component, specifically excluding the anchor loss to verify its role in device-cloud alignment. As detailed in Table~\ref{tab:ablation_loss}, the anchor loss $\mathcal{L}_\text{anchor}$ proves fundamental for guaranteeing the classification accuracy of the Inter-Head, and its removal leads to poor training convergence alongside a marked degradation in image quality. Regarding the core Inter-DRO terms, the center loss $\mathcal{L}_{c}$ aligns the device-cloud distributions via KL divergence, aiming for soft probability matching rather than the hard accuracy enforcement of the anchor loss. Simultaneously, the lower-bound loss $\mathcal{L}_{l}$ facilitates the detection of uncertain tokens. Ablating either $\mathcal{L}_{c}$ or $\mathcal{L}_{l}$ results in a minor decline in collaborative performance. In contrast, the upper-bound loss $\mathcal{L}_{u}$ is critical for the precise selection of target tokens, evidenced by the significant quality drop upon its removal. Collectively, these components empower the Inter-Head to accurately discern valid tokens and robustly evaluate uncertainty.

\subsubsection{Simulation of Communication Costs}
\label{sec:comm_simulation}

\begin{table}[t]
\centering
\caption{Comparison of communication costs between Lantern and CIAR under different network conditions. \textbf{Device} and \textbf{Cloud} denote the computational latency on the respective endpoints, while \textbf{Communication} represents the data transmission overhead.}
\resizebox{\textwidth}{!}{
\begin{tabular}{c|c|ccccc}
\toprule[2pt]
\textbf{Network} & \textbf{Method} & \textbf{Device (ms)} & \textbf{Cloud (ms)} & \textbf{Communication (ms)} & \textbf{Total (ms)} & \textbf{Ratio (Comm/Tot)} \\ \midrule \midrule
\multirow{2}{*}{5G} & Lantern & 11.63 & 11586.29 & 4424.63 & 16022.54 & 0.2762 \\
 & \cellcolor[HTML]{F2F2FF}\textbf{Ours} & \cellcolor[HTML]{F2F2FF}13.54 & \cellcolor[HTML]{F2F2FF}5102.6 & \cellcolor[HTML]{F2F2FF}\textbf{1678.13} & \cellcolor[HTML]{F2F2FF}\textbf{6794.27} & \cellcolor[HTML]{F2F2FF}\textbf{0.247} \\ \midrule
\multirow{2}{*}{4G} & Lantern & 11.9 & 11577.91 & 22291.31 & 33881.12 & 0.6579 \\
 & \cellcolor[HTML]{F2F2FF}\textbf{Ours} & \cellcolor[HTML]{F2F2FF}14.1 & \cellcolor[HTML]{F2F2FF}4847.2 & \cellcolor[HTML]{F2F2FF}\textbf{8173.95} & \cellcolor[HTML]{F2F2FF}\textbf{13035.26} & \cellcolor[HTML]{F2F2FF}\textbf{0.6271} \\ \midrule
\multirow{2}{*}{WiFi} & Lantern & 11.69 & 11436.11 & 8768.01 & 20215.81 & 0.4337 \\
 & \cellcolor[HTML]{F2F2FF}\textbf{Ours} & \cellcolor[HTML]{F2F2FF}14.11 & \cellcolor[HTML]{F2F2FF}5052.52 & \cellcolor[HTML]{F2F2FF}\textbf{3343.94} & \cellcolor[HTML]{F2F2FF}\textbf{8410.58} & \cellcolor[HTML]{F2F2FF}\textbf{0.3976} \\ \bottomrule[2pt]
\end{tabular}
}
\label{tab:network_latency}
\end{table}

Due to physical resource constraints, we simulate the cloud-device collaboration environment following standard protocols in cloud-device collaboration works~\citep{kang2017CDC_neurosurgeon,laskaridis2020CDC_spinn,li2018CDC_jalad,lv2023CDC_duet}. We evaluate the inference performance across 5G, 4G, and WiFi network conditions.

\textbf{Simulation Metric.} The total latency $T_\text{total}$ incorporates both communication and computational overhead. The communication latency $T_\text{comm}$ for a single transmission event is calculated as:
\begin{equation}
    T_\text{comm} = L + \frac{DataSize}{B}
\end{equation}
where $L$ denotes the network latency (RTT), $B$ represents the bandwidth, and $DataSize$ refers to the transmission payload size. Consequently, the total inference latency is formulated as the sum of bidirectional communication and computation costs:
\begin{equation}
    T_\text{total} = T_\text{comm}^\text{cloud} + T_\text{comm}^\text{device} + T_\text{cloud} + T_\text{device}
\end{equation}
Here, $T_\text{cloud}$ and $T_\text{device}$ represent the computational latency on the cloud and device, respectively, while $T_\text{comm}^\text{cloud}$ and $T_\text{comm}^\text{device}$ denote the downlink (cloud-to-device) and uplink (device-to-cloud) transmission latencies. Specific network configurations are detailed in Table~\ref{tab:network_settings}.



\begin{wraptable}{r}{0.45\textwidth} 
    \centering
    \caption{Detailed configurations of different network conditions.}
    \resizebox{1\linewidth}{!}{ 
        \begin{tabular}{c|cc}
        \toprule[2pt]
        \textbf{Network} & \textbf{Bandwidth (Mbps)} & \textbf{Latency (ms)} \\ \midrule \midrule
        5G & 300.00 & 10.00 \\
        4G & 20.00 & 50.00 \\
        WiFi & 100.00 & 20.00 \\ \bottomrule[2pt]
        \end{tabular}
    }
    \label{tab:network_settings}
\end{wraptable}
\textbf{Analysis.} As shown in Table~\ref{tab:network_latency}, CIAR demonstrates superior bandwidth efficiency compared to the baseline Lantern method. By accurately identifying trustworthy tokens locally, CIAR minimizes the transmission payload, resulting in a significantly lower Communication-to-Total Latency ratio (e.g., \textbf{0.247 vs. 0.276} under 5G). This selective transmission strategy effectively offloads computational burdens from the cloud while mitigating network bottlenecks.

\subsubsection{Generalization to Different Architectures}
\label{sec:generalization}

To evaluate the architectural universality of CIAR, we extend our experiments to two distinct generative frameworks on ImageNet-1K: VAR (based on Next-Scale Prediction) and DiT (a Diffusion-based model).

\textbf{Settings.} For the VAR adaptation, we utilize a single autoregressive layer to predict the next scale, equipped with a fine-tuned Inter-Head. For DiT, we adapt a single DiT block equipped with an Inter-Head to predict noise and variance for the subsequent timestep.

\textbf{Analysis.} In the VAR context, CIAR successfully accelerates inference while preserving image fidelity. However, due to the inherently shorter sequence lengths in the NSP paradigm, the relative overhead of cloud verification becomes more significant, resulting in a slightly reduced acceleration gain compared to CoDe in this specific setting. Conversely, for diffusion models like DiT, CIAR demonstrates remarkable efficiency, achieving IS and FID scores comparable to a 25-step DDIM sampler but with a \textbf{2.5$\times$ speedup}. This underscores the potential of CIAR in accelerating iterative refinement processes beyond standard autoregression.


\begin{table}[t]
\centering
\caption{Performance of CIAR applied to VAR and DiT architectures on ImageNet-1K.}
\setlength{\tabcolsep}{14pt}
\resizebox{\textwidth}{!}{
\begin{tabular}{c|c|c|c|c|c|c}
\toprule[2pt]
\textbf{Model} & \textbf{Method} & \textbf{IS$\uparrow$} & \textbf{FID$\downarrow$} & \textbf{Precision$\uparrow$} & \textbf{Recall$\uparrow$} & \textbf{Latency} \\ \midrule \midrule
 & DDIM-50 steps & 241 & 2.27 & - & - & x1.00 \\
 & DDIM-25 steps & 232 & 3.18 & - & - & x2.00 \\
 & DDIM-20 steps & 221 & 3.81 & - & - & x2.50 \\
 & DDIM-12 steps & 184 & 7.80 & - & - & \underline{x4.17} \\
\multirow{-5}{*}{DiT} & \cellcolor[HTML]{F2F2FF}\textbf{Ours} & \cellcolor[HTML]{F2F2FF}240 & \cellcolor[HTML]{F2F2FF}2.32 & \cellcolor[HTML]{F2F2FF}- & \cellcolor[HTML]{F2F2FF}- & \cellcolor[HTML]{F2F2FF}\textbf{x4.90} \\ \midrule
 & VAR-d24 & 311 & 2.11 & 0.82 & 0.59 & x1.00 \\
 & VAR-d20 & 301 & 2.61 & 0.83 & 0.56 & x1.65 \\
 & CoDe & 297 & 2.26 & 0.80 & 0.60 & \textbf{x2.90} \\
\multirow{-4}{*}{VAR} & \cellcolor[HTML]{F2F2FF}\textbf{Ours} & \cellcolor[HTML]{F2F2FF}290 & \cellcolor[HTML]{F2F2FF}2.34 & \cellcolor[HTML]{F2F2FF}0.81 & \cellcolor[HTML]{F2F2FF}0.60 & \cellcolor[HTML]{F2F2FF}\underline{x2.60} \\ \bottomrule[2pt]
\end{tabular}
}
\label{tab:imagenet_comparison}
\end{table}
\subsection{Analysis of the Uncertainty Quantification}
\label{appendix:uncertainty-analysis}

In this section we analyze, refine, and justify the uncertainty score given in Eq.~\eqref{eq:uncertainty_score}.  
We denote by $n:=|\mathcal{V}|$ the vocabulary size, and for brevity drop the time index $t$ when the derivation is pointwise for a single autoregressive step.  
Thus
\begin{equation}
\delta=(\delta_{1},\dots,\delta_{n})^{\top}\in\mathbb{R}_{\ge 0}^{n},\qquad
\delta_{i}:=p_{i}^{u}-p_{i}^{\ell},
\end{equation}
\begin{equation}
\Omega:=\|\delta\|_{1}=\sum_{i=1}^{n}\delta_{i},\qquad
\bar{\delta}:=\frac{\Omega}{n},\qquad
\Sigma:=\sqrt{\frac{1}{n}\sum_{i=1}^{n}(\delta_{i}-\bar{\delta})^{2}},
\end{equation}
and the uncertainty score is
\begin{equation}
\label{eq:uncertainty_score_repeat}
\mathcal{U}(\mathcal{P})=\Omega\cdot\Sigma.
\end{equation}

\subsubsection{Principles: Statement, Checking and Minor Refinement}
First, based on the characteristics of visual autoregressive generation, we propose three qualitative principles, followed by a concise mathematical clarification.

\paragraph{Principle 1 (Total Ambiguity).}
Uncertainty increases with the total volume of the probability space.

\begin{equation}
    \sum_i (q_i^U - q_i^L) \uparrow \;\;\;\Rightarrow\;\;\; \mathcal{U} \uparrow
\end{equation}

\emph{Clarification.}  
The total interval volume is measured by $\Omega=\sum_{i}\delta_{i}$.  
Intuitively $\Omega$ upper-bounds the $L_{1}$-diameter of the feasible probability polytope (see Proposition~\ref{prop:diameter} below), hence it is a natural first-order measure of how much the predicted distribution may vary.

\paragraph{Principle 2 (Confidence Disparity).}
Uncertainty is also high when confidence varies greatly across tokens (some intervals wide, others narrow).  
Uniformly narrow intervals imply low uncertainty.

\begin{equation}
    \mathrm{Var}\big(q_i^U - q_i^L\big) \uparrow \;\;\;\Rightarrow\;\;\; \mathcal{U} \uparrow
\end{equation}

\emph{Clarification.}  
The standard deviation $\Sigma$ quantifies the spread (anisotropy) of allowable perturbations across coordinates.  
A large $\Sigma$ indicates the ambiguity is concentrated in a subset of tokens (directional uncertainty), whereas $\Sigma=0$ indicates perfectly uniform widths $\delta_{i}=\bar{\delta}$ for all $i$.  
We emphasize an important design decision: the proposed metric intentionally \emph{combines} total volume and disparity so that the score is elevated only when both the total freedom \emph{and} its anisotropy are significant.  
This is consistent with downstream needs in autoregressive visual synthesis, where one often wants to detect ``actionable'' uncertainty (large and structured ambiguity) rather than ubiquitous, perfectly symmetric fuzziness across the whole vocabulary.

\paragraph{Principle 3 (Local Certainty).}
If probability mass is concentrated on a single token (an interval vanishes and its upper bound approaches $1$), uncertainty diminishes.

\begin{equation}
    \exists i:\;\; (q_i^U - q_i^L) \to 0 \;\land\; q_i^U \to 1 
    \;\;\;\Rightarrow\;\;\; \mathcal{U} \downarrow
\end{equation}

\emph{Clarification.}  
If a token $i$ attains (approximately) all probability mass in both lower and upper bounds (i.e., $p_{i}^{\ell}\approx p_{i}^{u}\approx 1$), then necessarily the remaining tokens' feasible probability mass becomes (nearly) zero.  
Under this natural consistency regime the widths $\delta_{j}$ for $j\neq i$ must be small and thus $\Omega$ and $\Sigma$ are small, yielding small $\mathcal{U}$.  
We make precise statements of this type below (Proposition~\ref{prop:local-certainty}).

\subsubsection{From Principles to the Product Form}
We now justify the particular algebraic choice $\mathcal{U}=\Omega\cdot\Sigma$.

\paragraph{Axiomatic Desiderata.}  
A simple set of desiderata for a scalar function $f(\delta)$ capturing the following principles:
\begin{enumerate}
  \item $f(\delta)\ge 0$ for all $\delta$;
  \item $f(\delta)=0$ if $\Omega=0$ (no ambiguity) or if the widths are perfectly uniform ($\Sigma=0$), reflecting the design decision that perfectly symmetric ambiguity is not considered ``high priority'' uncertainty for autoregressive decision-making;
  \item $f(\alpha\delta)$ should increase with scalar $\alpha>1$ (monotonicity in total scale);
  \item $f$ should be sensitive to both scale ($\Omega$) and dispersion ($\Sigma$) and be simple to compute and interpret.
\end{enumerate}
These requirements are satisfied by the multiplicative ansatz $f(\delta)=g(\Omega)\cdot h(\Sigma)$ with monotone $g,h$ and the simplest choice $g(\Omega)=\Omega$, $h(\Sigma)=\Sigma$.  
The product has the desirable property that it vanishes if either factor is zero and scales quadratically under uniform scaling $\delta\mapsto\alpha\delta$, which matches the intuition that both a large feasible set \emph{and} substantial anisotropy are needed to produce a high-priority uncertainty signal.

\subsubsection{Mathematical Properties and Proofs}
We now give several formal properties of $\mathcal{U}$.

\begin{claim}[Non-negativity and Zeros]
$\mathcal{U}(\mathcal{P})\ge 0$.  
Moreover $\mathcal{U}(\mathcal{P})=0$ if and only if $\Omega=0$ or $\Sigma=0$ (i.e., $\delta$ is the all-zero vector or $\delta$ is constant across coordinates).
\end{claim}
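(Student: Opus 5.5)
The plan is to argue directly from the definitions $\Omega=\|\delta\|_1$ and $\Sigma=\sqrt{\frac1n\sum_i(\delta_i-\bar\delta)^2}$ in the appendix, using only that $\delta\in\mathbb{R}_{\ge0}^n$. No earlier result is needed beyond that nonnegativity. It is guaranteed because Inter-Head produces $r_t=\text{Softplus}(\cdot)\ge 0$, and we take InterFuse to preserve the ordering $p_i^\ell\le p_i^u$ when passing from the logit interval $[c-r,c+r]$ to the probability interval, as Appendix~\ref{app:interval_proof} is meant to establish.

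\textbf{Non-negativity.} Since each $\delta_i\ge 0$, we get $\Omega=\sum_i\delta_i\ge0$. Also $\Sigma\ge0$, because it is the principal square root of an average of squares. Hence $\mathcal{U}=\Omega\cdot\Sigma\ge 0$.

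\textbf{Characterization of zeros.} A product of two real numbers vanishes iff at least one factor vanishes, so $\mathcal{U}=0$ iff $\Omega=0$ or $\Sigma=0$. It remains to translate each condition into the stated form.
\begin{itemize}
\item For $\Omega$: a sum of nonnegative terms is zero iff every term is zero, so $\Omega=0$ iff $\delta=0$.
\item For $\Sigma$: $\Sigma=0$ iff $\sum_i(\delta_i-\bar\delta)^2=0$. A sum of squares is zero iff each square is zero, so this holds iff $\delta_i=\bar\delta$ for all $i$, that is, iff $\delta$ is constant across coordinates.
\end{itemize}
Conversely, if $\delta$ is constant, say $\delta_i=c$ for all $i$, then $\bar\delta=c$ and so $\Sigma=0$.

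\textbf{Remark on the statement.} The zero vector is itself constant, so the case $\Omega=0$ is subsumed by $\Sigma=0$. The clean form of the claim is therefore $\mathcal{U}=0$ iff $\delta$ is constant. We state both cases only to match the claim as written.

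Nothing here is a genuine obstacle. The only point requiring care is the justification that $\delta_i\ge0$, on which the equivalence $\Omega=0\Leftrightarrow\delta=0$ depends. Without nonnegativity, a $\delta$ with mixed signs could have $\Omega$ computed as a plain sum equal to zero without $\delta$ vanishing. With the $L_1$ norm as literally written, $\|\delta\|_1=0\Leftrightarrow\delta=0$ holds regardless, so we would invoke the norm property directly as a fallback.
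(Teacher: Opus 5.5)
Your proof is correct and follows the paper's argument: both factors are nonnegative, so $\mathcal{U}\ge 0$, and a product vanishes iff one factor does, with $\Omega=0\Leftrightarrow\delta=0$ and $\Sigma=0\Leftrightarrow\delta_i=\bar\delta$ for all $i$. You are somewhat more explicit than the paper in justifying $\delta_i\ge 0$ and the sum-of-squares step, and your remark that the $\Omega=0$ case is subsumed by $\Sigma=0$ matches the paper's own observation that $\Omega=0$ already forces $\Sigma=0$.
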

\begin{proof}
Immediate from definitions: $\Omega\ge 0$, $\Sigma\ge 0$, hence $\mathcal{U}=\Omega\Sigma\ge 0$.  
If $\Omega=0$ then $\delta\equiv 0$ and $\Sigma=0$ and $\mathcal{U}=0$.  
If $\Sigma=0$ then all $\delta_{i}=\bar{\delta}$ and $\mathcal{U}=\Omega\cdot 0=0$.  
Conversely, $\Omega\Sigma=0$ implies either $\Omega=0$ or $\Sigma=0$.
\end{proof}

\begin{claim}[Scaling]
For any scalar $\alpha\ge 0$, $\mathcal{U}(\alpha\delta)=\alpha^{2}\mathcal{U}(\delta)$.
\end{claim}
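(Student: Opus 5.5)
The plan is to verify the scaling separately for each factor of the product form $\mathcal{U}=\Omega\cdot\Sigma$ in Eq.~\eqref{eq:uncertainty_score_repeat}, viewing $\mathcal{U}$ as a function of the width vector $\delta\in\mathbb{R}_{\ge 0}^{n}$. Since $\alpha\ge 0$, the vector $\alpha\delta$ again has nonnegative entries, so $\mathcal{U}(\alpha\delta)$ is well defined by the same formulas.

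\emph{Step 1 (total ambiguity).} The $L_1$ norm is absolutely homogeneous, so $\Omega(\alpha\delta)=\|\alpha\delta\|_1=|\alpha|\,\|\delta\|_1=\alpha\,\Omega(\delta)$, using $\alpha\ge 0$. The mean scales the same way: $\overline{\alpha\delta}=\Omega(\alpha\delta)/n=\alpha\bar{\delta}$.

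\emph{Step 2 (confidence disparity).} Each centered coordinate of the scaled vector is
\[
(\alpha\delta_i)-\alpha\bar{\delta}=\alpha(\delta_i-\bar{\delta}).
\]
Therefore
\[
\Sigma(\alpha\delta)=\sqrt{\frac{1}{n}\sum_{i=1}^{n}\alpha^{2}(\delta_i-\bar{\delta})^{2}}=\sqrt{\alpha^{2}}\,\Sigma(\delta)=\alpha\,\Sigma(\delta).
\]
The last equality uses $\sqrt{\alpha^2}=|\alpha|=\alpha$, which again needs $\alpha\ge 0$.

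\emph{Step 3 (combine).} Multiplying the two factors gives $\mathcal{U}(\alpha\delta)=\alpha\Omega(\delta)\cdot\alpha\Sigma(\delta)=\alpha^{2}\,\mathcal{U}(\delta)$. The degenerate case $\alpha=0$ needs no separate treatment: both sides equal $0$, consistent with the preceding claim on zeros.

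There is no real obstacle. The only care needed is to track where $\alpha\ge 0$ is used, namely when removing absolute values in the norm and in the square root. This is also why the statement restricts to nonnegative scalars: for $\alpha<0$ one would get $|\alpha|^2=\alpha^2$ formally, but $\alpha\delta$ would no longer be a valid width vector.
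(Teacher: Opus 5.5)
Your proof is correct and follows the same route as the paper: show that $\Omega$ and $\Sigma$ each scale by the factor $\alpha$, then multiply. Your version also makes explicit the two places where $\alpha\ge 0$ is needed (the absolute value in the norm and the square root), which the paper's one-line proof leaves implicit.
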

\begin{proof}
If $\delta'=\alpha\delta$ then $\Omega'=\alpha\Omega$ and $\Sigma'=\alpha\Sigma$; multiply to obtain $\mathcal{U}'=\alpha^{2}\mathcal{U}$.
\end{proof}

\begin{claim}[Upper Bound in Terms of $\Omega$]
\label{claim:upper-omega}
For any $\delta\ge 0$,
\begin{equation}
\label{eq:sigma-bound}
\Sigma\le\frac{\sqrt{n-1}}{n}\Omega,
\qquad\text{and hence}\qquad
\mathcal{U}\le\frac{\sqrt{n-1}}{n}\Omega^{2}.
\end{equation}
The upper bound is tight: equality is attained when one coordinate equals $\Omega$ and the remaining $n-1$ coordinates are zero.
\end{claim}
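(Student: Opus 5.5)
The plan is to rewrite the variance through the second moment and then bound the second moment of a nonnegative vector by the square of its $L_1$ norm. Expanding the definition gives
\begin{equation}
\Sigma^{2}=\frac{1}{n}\sum_{i=1}^{n}\delta_i^{2}-\bar{\delta}^{2}=\frac{1}{n}\|\delta\|_2^{2}-\frac{\Omega^{2}}{n^{2}}.
\end{equation}
So bounding $\Sigma$ in terms of $\Omega$ comes down to bounding $\|\delta\|_2^2$ in terms of $\Omega^2$.

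For that step I will use the nonnegativity $\delta\ge 0$. Expanding $\Omega^{2}=\bigl(\sum_i\delta_i\bigr)^{2}=\sum_i\delta_i^{2}+\sum_{i\ne j}\delta_i\delta_j$, every cross term is nonnegative, so $\|\delta\|_2^{2}\le\Omega^{2}$. Substituting this into the expression above gives
\begin{equation}
\Sigma^{2}\le\frac{\Omega^{2}}{n}-\frac{\Omega^{2}}{n^{2}}=\frac{(n-1)\Omega^{2}}{n^{2}}.
\end{equation}
Taking square roots, which is legitimate because both sides are nonnegative, yields $\Sigma\le\frac{\sqrt{n-1}}{n}\Omega$. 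Multiplying by $\Omega\ge 0$ then gives $\mathcal{U}=\Omega\Sigma\le\frac{\sqrt{n-1}}{n}\Omega^{2}$.

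For tightness, I take $\delta=\Omega e_1$. All cross terms vanish, so $\|\delta\|_2^2=\Omega^2$ exactly, and both inequalities become equalities.

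As a check on the equality case, the only inequality used is dropping the cross terms. Equality therefore holds exactly when $\delta_i\delta_j=0$ for all $i\ne j$, that is, when at most one coordinate is nonzero. This matches the statement of Claim~\ref{claim:upper-omega}. It also covers the degenerate case $\Omega=0$, where both sides of the bound are zero.

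I do not expect any real obstacle in this argument. The one point to watch is that the bound $\|\delta\|_2\le\|\delta\|_1$, used in squared form, relies on $\delta\ge 0$ only through the nonnegativity of the cross terms. The constraint $\delta_i=p_i^u-p_i^\ell\ge 0$ guarantees this, so it should be cited explicitly at that step.
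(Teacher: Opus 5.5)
Your proof is correct and follows the paper's route. Both arguments rewrite $\Sigma^{2}=\frac{1}{n}\sum_i\delta_i^{2}-\bar{\delta}^{2}$, bound $\sum_i\delta_i^{2}\le\Omega^{2}$, and use the one-hot vector to show equality; the only difference is that you prove $\sum_i\delta_i^{2}\le\Omega^{2}$ explicitly via the nonnegative cross terms and characterize the equality case, where the paper simply cites the extremal property of $\ell_2$ under fixed $\ell_1$.
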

\begin{proof}
For fixed $\Omega$ the quantity $\sum_{i}\delta_{i}^{2}$ is maximized by concentrating all mass on a single coordinate (this is the well-known extremal property of $\ell_{2}$ under fixed $\ell_{1}$).  
Concretely, with $\bar{\delta}=\Omega/n$,
\[
\Sigma^{2}=\frac{1}{n}\sum_{i=1}^{n}\delta_{i}^{2}-\bar{\delta}^{2}
\le\frac{1}{n}\Omega^{2}-\frac{\Omega^{2}}{n^{2}}
=\frac{\Omega^{2}}{n^{2}}(n-1).
\]
Taking square-roots yields the bound on $\Sigma$; multiplying by $\Omega$ gives the stated bound on $\mathcal{U}$.  
The one-hot vector $(\Omega,0,\dots,0)$ attains the bound.
\end{proof}

\begin{claim}[Upper Bound in Terms of $\sum\delta_{i}^{2}$]
Let $S_{2}:=\sum_{i=1}^{n}\delta_{i}^{2}$.  Then
\begin{equation}
\mathcal{U}\le\frac{S_{2}}{2}.
\end{equation}
This bound is tight for a suitable two-point configuration.
\end{claim}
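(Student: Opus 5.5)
The bound should follow from a single AM--GM step, once $S_2$ is written in terms of $\Omega$ and $\Sigma$. The key identity is the variance decomposition already used in the proof of Claim~\ref{claim:upper-omega}:
\[
\Sigma^{2}=\frac{1}{n}S_{2}-\bar{\delta}^{2},
\qquad\text{i.e.}\qquad
\bar{\delta}^{2}+\Sigma^{2}=\frac{S_{2}}{n},
\]
with $\bar{\delta}=\Omega/n$.

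Next I write the score so that the two quantities in this identity appear:
\[
\mathcal{U}=\Omega\Sigma=n\,\bar{\delta}\,\Sigma .
\]
Since $\bar{\delta}\ge 0$ and $\Sigma\ge 0$, the inequality $2\bar{\delta}\Sigma\le \bar{\delta}^{2}+\Sigma^{2}$ applies. It gives
\[
\mathcal{U}\le \frac{n}{2}\bigl(\bar{\delta}^{2}+\Sigma^{2}\bigr)=\frac{n}{2}\cdot\frac{S_{2}}{n}=\frac{S_{2}}{2}.
\]
This completes the inequality and uses no constraint on $\delta$ beyond $\delta\ge 0$.

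For tightness, equality in AM--GM holds exactly when $\Sigma=\bar{\delta}$, so I need a configuration whose standard deviation equals its mean. I would take the two-point configuration with $n$ even in which half the coordinates equal some $a>0$ and the other half equal $0$. For this vector:
\begin{itemize}
\item the mean is $\bar{\delta}=a/2$;
\item the deviation of every coordinate from the mean has absolute value $a/2$, so $\Sigma=a/2$;
\item hence $\mathcal{U}=\Omega\Sigma=(na/2)(a/2)=na^{2}/4$, while $S_{2}/2=(n/2)a^{2}/2=na^{2}/4$.
\end{itemize}
The two sides agree, so the bound is attained.

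For odd $n$, I would instead use two values $b>c\ge 0$ taken with multiplicities $k$ and $n-k$. The condition $\Sigma=\bar{\delta}$ is then a single scalar equation in $b/c$, which should be solvable for a suitable $k$; I would note this only briefly. The only real point of care is the tightness statement, because the phrase ``suitable two-point configuration'' has to be read as this half/half split, or its odd-$n$ analogue.
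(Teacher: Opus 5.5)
Your proof is correct and follows the paper's route. Both arguments rewrite $S_2=n(\bar{\delta}^2+\Sigma^2)$ and $\mathcal{U}=n\bar{\delta}\Sigma$; your AM--GM step replaces the paper's calculus maximization of $n\mu\sqrt{S_2/n-\mu^2}$, with equality at $\Sigma=\bar{\delta}$, i.e.\ $\mu^2=S_2/(2n)$. Your explicit half-$a$/half-$0$ configuration, and the odd-$n$ variant (which does work for $n\ge 3$ with $c>0$ and $k<n/2$), supply the tightness witness that the paper only asserts.
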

\begin{proof}
Write $\mu=\bar{\delta}$.  Then $S_{2}=n(\Sigma^{2}+\mu^{2})$.  
Using $\mathcal{U}=n\mu\Sigma$ and treating $S_{2}$ as fixed, maximize $g(\mu):=n\mu\sqrt{S_{2}/n-\mu^{2}}$ over $\mu\in[0,\sqrt{S_{2}/n}]$.  
Standard calculus yields the maximizer $\mu^{2}=S_{2}/(2n)$ and the maximum value $g_{\max}=S_{2}/2$.
\end{proof}

\begin{proposition}[Feasible Set Diameter]
\label{prop:diameter}
Let $\mathcal{P}:=\{p\in\mathbb{R}^{n}:p_{i}^{\ell}\le p_{i}\le p_{i}^{u},\;\sum_{i}p_{i}=1\}$ be the feasible polytope of true discrete distributions consistent with the interval bounds.  
Then for any $p,q\in\mathcal{P}$,
\begin{equation}
\|p-q\|_{1}\le\Omega.
\end{equation}
Consequently $\Omega$ upper-bounds the $L_{1}$-diameter of $\mathcal{P}$.
\end{proposition}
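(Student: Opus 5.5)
The plan is a coordinatewise argument. Both $p$ and $q$ lie in the same box $\prod_i[p_i^{\ell},p_i^{u}]$. Hence for every coordinate $i$ we have
\[
|p_i-q_i|\le p_i^{u}-p_i^{\ell}=\delta_i ,
\]
since two numbers in the same interval differ by at most its length. Summing over $i$ then gives
\[
\|p-q\|_1=\sum_{i=1}^{n}|p_i-q_i|\le\sum_{i=1}^{n}\delta_i=\Omega .
\]
Taking the supremum over $p,q\in\mathcal{P}$ yields the diameter statement.

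The only points needing care are bookkeeping. First, if $\mathcal{P}$ is empty, the claim is vacuous. Nonemptiness is guaranteed by $\sum_i p_i^{\ell}\le 1\le\sum_i p_i^{u}$, the InterFuse property from Appendix~\ref{app:interval_proof}, but the inequality itself does not need it. Second, $\delta_i\ge 0$ holds because each interval is well defined ($p_i^{\ell}\le p_i^{u}$).

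As a remark, I would note a sharper bound that exploits the constraint $\sum_i p_i=\sum_i q_i=1$. The positive and negative parts of $p-q$ each have total mass $\tfrac12\|p-q\|_1$, and each part is bounded by $\sum_i\delta_i$. Taking the better of the two gives $\|p-q\|_1\le 2\min(\cdot)$, but $\Omega$ is all the statement requires.

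I do not expect a genuine obstacle here; the argument is a one-line triangle-type estimate. The main place to be careful is stating clearly that the simplex constraint is not used, so the bound holds for the whole box and a fortiori for $\mathcal{P}$.
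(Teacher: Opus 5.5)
Your proof is correct and matches the paper's: since $p_i$ and $q_i$ both lie in $[p_i^{\ell},p_i^{u}]$, you get $|p_i-q_i|\le\delta_i$, and summing gives $\|p-q\|_1\le\Omega$ without using the simplex constraint. Your side remark on a sharper bound is muddled as written: bounding each part of $p-q$ by the full $\sum_i\delta_i$ only yields $2\Omega$, and you would need to bound each part by the sum of $\delta_i$ over its own support. The statement does not need that remark.
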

\begin{proof}
For each coordinate $i$ we have $|p_{i}-q_{i}|\le\delta_{i}$ because both $p_{i},q_{i}$ lie in the interval $[p_{i}^{\ell},p_{i}^{u}]$.  
Summing yields $\|p-q\|_{1}=\sum_{i}|p_{i}-q_{i}|\le\sum_{i}\delta_{i}=\Omega$.
\end{proof}

\begin{proposition}[Local Certainty (A Sufficient Condition)]
\label{prop:local-certainty}
Suppose there exists an index $k$ such that $p_{k}^{\ell}\to 1$ and $p_{k}^{u}\to 1$.  
Then $\Omega\to 0$ and hence $\mathcal{U}\to 0$.  
More generally, if $p_{k}^{\ell}\to 1$ and $\sum_{j\ne k}p_{j}^{u}\to 0$, then $\Omega\to 0$.
\end{proposition}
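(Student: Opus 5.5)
The plan is to bound $\Omega$ directly by quantities that are assumed to vanish, and then pass from $\Omega\to 0$ to $\mathcal{U}\to 0$ using Claim~\ref{claim:upper-omega}. Since $n$ is fixed throughout, that claim gives $\mathcal{U}\le\frac{\sqrt{n-1}}{n}\Omega^{2}$, so it suffices to show $\Omega\to 0$ in both parts of the proposition. I will use only two facts: widths are nonnegative, and probability bounds lie in $[0,1]$. In particular $0\le p_j^{\ell}\le p_j^{u}\le 1$, which I take to be guaranteed by the InterFuse construction of Appendix~\ref{app:interval_proof}.

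I will prove the general statement first. Split
\[
\Omega=\delta_k+\sum_{j\ne k}\delta_j=(p_k^{u}-p_k^{\ell})+\sum_{j\ne k}(p_j^{u}-p_j^{\ell}).
\]
For the first term, $p_k^{u}\le 1$ gives $0\le\delta_k\le 1-p_k^{\ell}\to 0$. For the second, $p_j^{\ell}\ge 0$ gives $0\le\sum_{j\ne k}\delta_j\le\sum_{j\ne k}p_j^{u}\to 0$ by hypothesis. Hence $\Omega\to 0$, and also $\Sigma\to 0$ since $\Sigma\le\frac{\sqrt{n-1}}{n}\Omega$.

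For the first part, I must turn $p_k^{\ell},p_k^{u}\to 1$ into control of the other upper bounds. As literally stated, this does not follow: nothing in the bare interval constraints $\sum_i p_i^{\ell}\le 1\le\sum_i p_i^{u}$ stops some $p_j^{u}$ with $j\neq k$ from staying large. This is the main obstacle. I would resolve it by invoking the ``natural consistency regime'' the paper alludes to in the clarification of Principle~3, i.e.\ coherence (reachability) of the bounds:
\[
p_j^{u}\le 1-\sum_{i\ne j}p_i^{\ell}\quad\text{for all } j.
\]
Equivalently, each upper bound is attained by some distribution in the polytope $\mathcal{P}$ of Proposition~\ref{prop:diameter}. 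Under this assumption, for $j\ne k$ we get $p_j^{u}\le 1-p_k^{\ell}$, so
\[
\sum_{j\ne k}p_j^{u}\le (n-1)(1-p_k^{\ell})\to 0 ,
\]
because $n$ is fixed. This reduces the first part to the general statement.

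If I prefer not to add an assumption, I would state the first part explicitly under the coherence hypothesis. The only delicate point is making that hypothesis precise (or checking that InterFuse normalisation enforces it); the remaining steps are one-line inequalities.
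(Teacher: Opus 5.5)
Your proof of the general statement is the paper's termwise squeeze, done more carefully. The paper gets $\delta_k\to0$ from ``the hypothesis $p_k^u,p_k^\ell\to1$'', but $p_k^u\to1$ is not actually a hypothesis of the general claim. Your bound $\delta_k\le1-p_k^\ell$ shows that the standing assumption $p_k^u\le1$ is what does the work. You also bound $\sum_{j\ne k}\delta_j\le\sum_{j\ne k}p_j^u$ directly, rather than first deducing $p_j^\ell\to0$ from $\sum_i p_i^\ell\le1$.

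The real divergence is the first claim. The paper never derives it from $p_k^\ell,p_k^u\to1$ alone: its proof says ``If also $p_k^u\to1$ and $\sum_{j\ne k}p_j^u\to0$'', i.e.\ it silently adds the hypothesis of the general statement. Your diagnosis that this part does not follow as written is correct. For $n=2$, take $(p_1^\ell,p_1^u)=(1,1)$ and $(p_2^\ell,p_2^u)=(0,1)$. These satisfy $0\le p^\ell\le p^u\le1$ and $\sum_i p_i^\ell\le1\le\sum_i p_i^u$, yet $\Omega=1$. Your reachability condition $p_j^u\le1-\sum_{i\ne j}p_i^\ell$ is a sound way to make the paper's ``natural consistency regime'' precise, and under it the hypothesis $p_k^u\to1$ is even redundant.

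On your open question: InterFuse does not enforce this condition. With $c_1=c_2=0$, $r_1\to0$ and $r_2\to\infty$, one gets $p_1^\ell\to\tfrac12$, $p_2^u\to1$, $S_l\to\tfrac12$ and $S_u\to\tfrac32$. Neither rescaling step fires, and $p_2^u+p_1^\ell\to\tfrac32>1$. So state the first part explicitly under the coherence hypothesis, as you suggest.
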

\begin{proof}
If $p_{k}^{\ell}\to 1$ then, because $\sum_{i}p_{i}^{\ell}\le 1$, necessarily $p_{j}^{\ell}\to 0$ for all $j\ne k$.  
If also $p_{k}^{u}\to 1$ and $\sum_{j\ne k}p_{j}^{u}\to 0$, then for $j\ne k$ both $p_{j}^{u}\to 0$ and $p_{j}^{\ell}\to 0$.  
Hence for all $j\ne k$, $\delta_{j}=p_{j}^{u}-p_{j}^{\ell}\to 0$, and $\delta_{k}=p_{k}^{u}-p_{k}^{\ell}\to 0$ by the hypothesis $p_{k}^{u},p_{k}^{\ell}\to 1$.  
Therefore $\Omega=\sum_{i}\delta_{i}\to 0$, and since $\Sigma$ is bounded by $\Omega$ up to constants, $\mathcal{U}=\Omega\Sigma\to 0$.
\end{proof}

\subsubsection{Statistical / Geometric Interpretation}
The feasible set $\mathcal{P}$ is the intersection of the probability simplex with the axis-aligned hyperrectangle $\prod_{i}[p_{i}^{\ell},p_{i}^{u}]$.  
Two geometric consequences follow.

\begin{itemize}
  \item \textbf{Diameter control.}  By Proposition~\ref{prop:diameter}, $\Omega$ upper-bounds the maximal $L_{1}$ distance between feasible distributions.  
  Thus $\Omega$ controls the worst-case change in any linear functional $a^{\top}p$ over $\mathcal{P}$: for any $a\in\mathbb{R}^{n}$,
  \begin{equation}
  \max_{p,q\in\mathcal{P}}|a^{\top}p-a^{\top}q|
  \le\|a\|_{\infty}\max_{p,q}\|p-q\|_{1}\le\|a\|_{\infty}\Omega.
  \end{equation}
  In particular, the range of any coordinate $p_{i}$ is bounded by $\delta_{i}$ and the range of expectations of bounded functions is controlled by $\Omega$.

  \item \textbf{Anisotropy and actionability.}  $\Sigma$ quantifies the anisotropy of the hyperrectangle $\{\delta_{i}\}$ (it is zero for an isotropic rectangle $\delta_{i}=\mathrm{const}$).  
  High anisotropy (large $\Sigma$) implies that the feasible polytope allows comparatively large perturbations along a small set of coordinates---this is precisely the case where one can expect actionable model decisions (e.g. choose to re-query or defer) centred on particular tokens.
\end{itemize}

The chosen product $\mathcal{U}=\Omega\Sigma$ therefore measures (i) how large the feasible set can be (through $\Omega$) and (ii) how anisotropic/localized that freedom is (through $\Sigma$).  
Both ingredients are important for an uncertainty signal that feeds a selective acceptance/deferral policy in autoregressive synthesis.

\subsubsection{Further Algebraic Rewrites and Intuition}
Let $\mu=\bar{\delta}=\Omega/n$ and define the coefficient of variation
\begin{equation}
\operatorname{CV}:=\frac{\Sigma}{\mu}
\end{equation}
whenever $\mu>0$.  Then
\begin{equation}
\mathcal{U}=n\mu\Sigma=n\mu^{2}\operatorname{CV}.
\end{equation}
This decomposition clarifies trade-offs:
\begin{itemize}
  \item If the average width $\mu$ is tiny, $\mathcal{U}$ is small even when relative dispersion $\operatorname{CV}$ is large.
  \item If $\operatorname{CV}$ is tiny (nearly uniform widths), $\mathcal{U}$ is small regardless of $\mu$.
  \item If both $\mu$ and $\operatorname{CV}$ are moderate-to-large, $\mathcal{U}$ grows quickly (quadratically in the uniform-scaling sense).
\end{itemize}

\subsubsection{Practical Remarks for Thresholding and Normalization}
\begin{enumerate}
  \item \emph{Threshold selection.}  Because $\mathcal{U}$ scales quadratically under $\delta\mapsto\alpha\delta$, raw thresholds should account for the scale of typical interval widths.  
  In practice one can either (i) normalize by $n$ or by $\Omega^{2}$ to obtain a dimensionless metric, or (ii) calibrate the threshold on a held-out validation set of generation situations.

  \item \emph{Behavior for perfectly uniform large widths.}  As designed, $\mathcal{U}$ vanishes when widths are perfectly uniform.  
  If an application requires a non-zero response for the case ``all tokens are equally very ambiguous'', replace the multiplicative factor $\Sigma$ by $\sqrt{\Sigma^{2}+\varepsilon}$ or use a combined additive form $\lambda_{1}\Omega+\lambda_{2}\Omega\Sigma$ with small $\lambda_{1}>0$.  
  This is a deliberate modeling choice; the product form emphasizes structured (non-isotropic) ambiguity.

  \item \emph{Computational cost.}  $\Omega$ and $\Sigma$ are $O(n)$ to compute and therefore inexpensive even for large vocabularies.
\end{enumerate}

\subsubsection{Summary}
The proposed metric $\mathcal{U}(\mathcal{P})=\Omega\Sigma$ (i) is nonnegative and simple to compute, (ii) vanishes in the natural limiting regimes (no ambiguity, or perfectly uniform ambiguity), (iii) scales quadratically under uniform dilation of widths, (iv) admits tight bounds in terms of $\Omega$ and $\sum\delta_{i}^{2}$, and (v) admits a clear geometric/probabilistic interpretation via the feasible polytope $\mathcal{P}$.  
Together these properties rigorously support the design principles and show why $\mathcal{U}$ is an effective, low-cost scalar proxy for actionable uncertainty in autoregressive visual synthesis.

\qed

\subsection{Proof of Interval Validity for Inter-Head Output}
\label{app:interval_proof}

The Inter-Head module outputs a center vector $\mathbf{c}_t$ and a radius vector $\mathbf{r}_t$ for the hidden state $\mathbf{h}_t$, where $\mathbf{r}_t$ is non-negative due to the $\text{Softplus}$ function. The logit interval for each token $i$ is defined as $[l_i^l, l_i^u] = [c_i - r_i, c_i + r_i]$. To convert these logit intervals into probability intervals $\mathcal{P}_t = [p_i^l, p_i^u]$, we employ the $\textbf{InterFuse}$ operator, which ensures that the resulting probability intervals satisfy the validity condition $\sum_i p_i^l \le 1 \le \sum_i p_i^u$.

The $\textbf{InterFuse}$ operator computes the lower and upper probability bounds for each token $i$ as follows:
\begin{equation}
p_i^l = \frac{\exp(c_i - r_i)}{\sum_j \exp(c_j) - \exp(c_i) + \exp(c_i - r_i)},
\end{equation}
\begin{equation}
p_i^u = \frac{\exp(c_i + r_i)}{\sum_j \exp(c_j) - \exp(c_i) + \exp(c_i + r_i)}.
\end{equation}
These expressions ensure that $0 \le p_i^l \le p_i^u \le 1$ for each token $i$, as the denominators are constructed to be greater than or equal to the numerators. However, the sums $S_l = \sum_i p_i^l$ and $S_u = \sum_i p_i^u$ may not strictly satisfy $S_l \le 1 \le S_u$ due to the nonlinear interactions between the terms.

To guarantee the interval condition, we introduce a normalization step. Let $S_l$ and $S_u$ be the sums after the initial computation. We scale the probabilities as:
\begin{equation}
p_i^l \gets p_i^l \cdot \min\left(1, \frac{0.99}{S_l}\right),
\end{equation}
\begin{equation}
p_i^u \gets p_i^u \cdot \max\left(1, \frac{1.01}{S_u}\right).
\end{equation}
This scaling ensures that $\sum_i p_i^l \le 0.99 \le 1$ and $\sum_i p_i^u \ge 1.01 \ge 1$, thus strictly satisfying the interval requirement. The constants 0.99 and 1.01 provide a margin for numerical stability and are chosen empirically.

In the code implementation, we further enhance numerical stability by using the max-trick (subtracting the maximum value of $\mathbf{c}_t$ before exponentiation) and clamping the radius to a maximum value. The final algorithm ensures that the probability intervals are valid and suitable for downstream tasks such as uncertainty quantification and collaborative decoding.

\subsection{Use of Large Language Models (LLMs)}
During the initial exploratory phase of this research, we employed large language models solely to assist in retrieving relevant literature on autoregressive image generation. It is important to note that the LLM was not involved in the ideation, writing, or experimental design. All research concepts, ideas, and analyses were developed and conducted by the authors.

\end{document}